\pdfoutput=1

\documentclass[11pt]{article}

\usepackage{acl}

\usepackage{times}
\usepackage{latexsym}
\usepackage{enumitem}
\usepackage[T1]{fontenc}

\usepackage[utf8]{inputenc}

\usepackage{microtype}

\usepackage{algorithm}
\usepackage{algpseudocode}
\usepackage{amssymb}
\usepackage{amsmath,amsfonts}
\usepackage{algorithm}
\usepackage{algpseudocode} 
\usepackage{booktabs}
\usepackage{multirow}
\usepackage{graphicx}
\usepackage{pifont}
\usepackage{makecell}
\usepackage{svg}
\usepackage[resetstyle]{phfthm}


\title{Improving Conversational Abilities of Quantized Large Language Models via Direct Preference Alignment}


\author{Janghwan Lee${}^{1}$\thanks{\, Equal contribution\quad${}^\dagger$Corresponding author} , Seongmin Park${}^{1}$\footnotemark[1] , Sukjin Hong${}^{1,2}$, Minsoo Kim${}^{1}$, \\ {\bf Du-Seong Chang${}^{2}$} and {\bf Jungwook Choi${}^{1\dagger}$}  \\
        \normalsize{\textsuperscript{1}Hanyang University},
        \normalsize{\textsuperscript{2}KT} \\
        \normalsize{Seoul, Republic of Korea}\\
        \small{\textsuperscript{1}\texttt{\{hwanii0288, skstjdals, sjhong7898, minsoo2333\}@hanyang.ac.kr}} \\
        \small{\textsuperscript{2}\texttt{\{sukjin.hong, dschang\}@kt.com}, \textsuperscript{1$\dagger$}\texttt{choij@hanyang.ac.kr}} \\   
}


\begin{document}
\maketitle
\begin{abstract}

The rapid advancement of large language models (LLMs) has facilitated their transformation into conversational chatbots that can grasp contextual nuances and generate pertinent sentences, closely mirroring human values through advanced techniques such as instruction tuning and reinforcement learning from human feedback (RLHF). However, the computational efficiency required for LLMs, achieved through techniques like post-training quantization (PTQ), presents challenges such as token-flipping that can impair chatbot performance. In response, we propose a novel preference alignment approach, quantization-aware direct preference optimization (QDPO), that aligns quantized LLMs with their full-precision counterparts, improving conversational abilities. Evaluated on two instruction-tuned LLMs in various languages, QDPO demonstrated superior performance in improving conversational abilities compared to established PTQ and knowledge-distillation fine-tuning techniques, marking a significant step forward in the development of efficient and effective conversational LLMs.


\end{abstract}

\section{Introduction}
\label{sec:introduction}

As large language models (LLMs) advance in understanding the context of language and generating relevant sentences, LLMs are evolving into conversational chatbots that can naturally respond to a wide array of user requests~\cite{gpt4,vicuna2023,geminiteam2023gemini,touvron2023llama2}. Particularly noteworthy is the remarkable ability of LLMs to follow user instructions and align with human values, such as providing helpful and engaging responses through techniques like instruction tuning and reinforcement learning from human feedback (RLHF)~\cite{alpaca,longpre2023flan,chung2022scaling,mukherjee2023orca,ouyang2022training}. These advancements have greatly enhanced the capability to fine-tune pre-trained LLMs for various tasks and user preferences. 

For the effective implementation of LLM-based chatbots, addressing LLMs' computational complexity is essential. Weight load overhead, a critical bottleneck in LLM deployment, has led to the development of weight quantization techniques like post-training quantization (PTQ). PTQ reduces storage requirements by applying quantization to the weights of trained LLMs, thereby decreasing the necessary bit count for weight data storage ~\cite{frantar2023optq,lin2023awq}. Techniques such as AWQ ~\cite{lin2023awq} address quantization-induced accuracy loss through methods like scaling data distribution and weight updates aimed at preserving accuracy. The effectiveness of these quantization strategies has been measured by task-dependent benchmarks to evaluate model accuracy instead of multifaceted conversational qualities.

\begin{figure}[t]
\centerline{\includegraphics[width=\columnwidth]{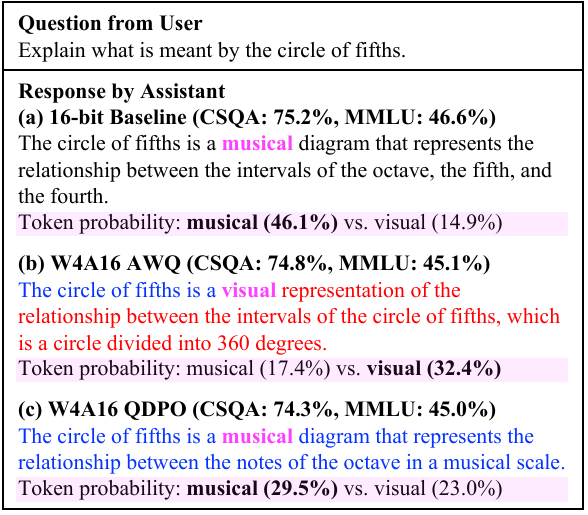}}
\caption{Example responses generated by Mi:dm-7B on 16-bit and 4-bit quantized inference.}
\label{fig:generation}
\end{figure}

Evaluating the conversational abilities of LLM-based chat assistants, especially for open-ended tasks requiring alignment with human preferences, challenges traditional score-based benchmarks due to the assistants' varied capabilities. To address this, new methods have been introduced for a more objective assessment of LLM chatbot performance ~\cite{vicuna2023,zheng2023judging}. The "LLM as a Judge" approach \cite{zheng2023judging} employs advanced LLMs like GPT-4~\cite{gpt4} to evaluate responsiveness in multi-turn conversations across eight conversational categories, focusing on conversational continuity and adherence to instructions. Furthermore, FLASK~\cite{ye2023flask} offers fine-grained evaluation criteria that dissect conversational skills linguistically. Yet, these methods mainly target full-precision chatbots, leaving the performance of cost-efficient quantized LLM chatbots less explored.

To assess quantization's effect on LLM-based chatbots' conversational abilities, we qualitatively compared the responses of quantized LLMs with a 16-bit baseline. Fig.~\ref{fig:generation} reveals that quantized models often fail to maintain engaging dialogues with repetitive phrases. We identify "token-flipping" — a phenomenon where quantization errors skew token distribution, causing incorrect token selection — as a crucial factor for this quality degradation. Traditional task-dependent evaluation metrics, such as Common Sense Question Answering (CSQA)~\cite{talmor2019commonsenseqa} and Massive Multitask Language Understanding (MMLU)~\cite{mmlu}, may not fully detect these nuances. For example, as shown in Fig.~\ref{fig:generation}(a) and (b), 16-bit and W4A16 inference exhibit similar task accuracy, but W4A16 inference produces responses that are not helpful to the user. This observation underscores the need for a new quantization approach that preserves user-perceived effectiveness beyond the task-dependent benchmarks.


To address the issue of token-flipping in quantized LLMs, we propose a novel preference alignment method that aligns quantized LLMs with full-precision counterparts. Drawing inspiration from direct preference optimization (DPO) strategies~\cite{rafailov2023direct,liu2023statistical}, our approach generates preference datasets directly from the quantized LLM and its full-precision counterpart to implement quantization-aware optimization for preference-reflective weight adjustments. Our quantization-aware direct preference optimization (QDPO) method improves the disparity between the top-1 and top-2 logits of token distribution, reducing token-flipping, and fostering more relevant and consistent text output. We rigorously tested QDPO on two instruction-tuned LLMs, Vicuna~\cite{zheng2023judging} and Mi:dm~\cite{midm}, assessing their conversational performance in both English and Korean. The results, as illustrated in Fig.~\ref{fig:generation}(c), demonstrate that QDPO markedly enhances conversational abilities beyond those achieved with established quantization techniques.

\section{Background}
\label{sec:background}

\subsection{Conversational Ability of LLM}
In the pre-training phase, LLMs learn from a vast corpus of text data collected from various sources, including the internet, books, articles, and conversations~\cite{c4,bookcorpus,gao2020pile,refinedweb}. Through this process, they acquire extensive knowledge on a wide range of topics, which forms the foundation that enables LLMs to flexibly respond to diverse conversational subjects~\cite{opt,touvron2023llama,touvron2023llama2,gpt3}. Subsequently, LLMs develop the capability to follow instructions through instruction fine-tuning and learn to align with human preferences via RLHF~\cite{alpaca,longpre2023flan,chung2022scaling,mukherjee2023orca,ouyang2022training}. Through such processes, LLM-based chatbots like GPT-4~\cite{gpt4} and Vicuna~\cite{vicuna2023} have acquired the conversational ability to engage with humans on various topics over multiple turns, distinguishing them from conventional language models.

To evaluate LLM-based chatbots, it is essential to assess their conversational ability, which is their key capability. However, existing task-dependent benchmarks such as MMLU~\cite{mmlu} and HELM~\cite{liang2023holistic} do not adequately capture human preferences, rendering them insufficient for evaluating LLM-based chatbots. In response, proposals for new benchmarks such as MT-Bench~\cite{zheng2023judging} and FLASK~\cite{ye2023flask} are emerging, focusing on multi-turn questions or alignment with human preferences to effectively evaluate conversational abilities.

\subsection{LLM Quantization}
LLMs demand high serving costs due to their extensive number of parameters~\cite{gpt3}.
Weight quantization techniques~\cite{lin2023awq,frantar2023optq,lee2023owq,tsld,enhancing} address this issue by representing the model's weights in lower bit-precision, thereby reducing memory size, lowering memory load time, and speeding up inference. Post-training quantization (PTQ) changes the model's weights directly to lower precision without additional training, offering cost benefits. However, due to concerns about accuracy loss, PTQ utilizes a portion of the training samples to calibrate and minimize the layer-wise quantization error through methods such as AWQ~\cite{lin2023awq}. Quantization-aware training (QAT), on the other hand, maintains the performance of a quantized model by applying quantization during the forward pass and training the model accordingly. When applying QAT to LLMs, due to the insufficient information from the ground truth, techniques often use Knowledge Distillation (KD) by reducing the distance between the logits of the quantized model and the full-precision model~\cite{tsld,liu2023llmqat}. 



However, previous quantization studies have evaluated their methods on task-dependent benchmarks, which show a limited scope for comprehensive evaluation of conversational abilities. For example, AWQ~\cite{lin2023awq} emphasizes that the quantized model achieves accuracy comparable to the baseline on CSQA. However, they do not analyze why only 35\% of the sentences generated by the quantized model are considered as good as those from the baseline, according to GPT-4~\cite{gpt4}’s evaluation in assessing conversational abilities. In this research, we analyze how the model’s quantization error impacts the conversational abilities of large language models and propose methods to enhance these capabilities.

\subsection{Alignment with Human Preferences}
The RLHF is an advanced method to improve the performance of LLMs by aligning with human preferences. It comprises three stages:

\textbf{Supervised Fine-Tuning (SFT).} SFT utilizes a dataset of human instructions to refine pre-trained LLMs.

\textbf{Reward Modeling.} This stage develops a reward model based on human preferences for LLM response pairs, using the Bradley-Terry (BT) model \cite{bradley1952rank} to quantify these preferences. It represents the distribution of preferences distribution $p^*$ between $y_1$ and $y_2$:
\begin{equation}
\label{eq:BT}
p^*(y_1 \succ y_2 | x) = \frac{e^{r^*(x,y_1)}}{e^{r^*(x,y_1)} + e^{r^*(x,y_2)}}, 
\end{equation}

\(r^*\) is defined as the optimal reward function. \(y_1\) and \(y_2\), assumed to be sampled from the optimal preference distribution \(p^*\) with prompt \(x\), the parameterized reward model estimates the parameter using maximum likelihood.\\

\textbf{Policy Optimization.} The LLM policy optimization is guided by the reward model to generate responses that better align with human preferences for the training prompts.
The reinforcement learning (RL) objective function is defined as follows:
\begin{equation}
\label{eq:rlhf}
\max_{\pi} \underset{\substack{x \sim D\\ y \sim \pi}}{\mathbb{E}} \left[ r(x, y) \right] - \beta D_{KL} \left[ \pi(y|x) \| \pi_{\text{ref}}(y|x)\right],
\end{equation}
\(\pi\) represents the LLM policy, \(\beta\) is a control parameter that regulates variations with respect to \(\pi_\text{ref}\). 
Recent approach~\cite{ouyang2022training} employs Proximal Policy Optimization ~\cite{schulman2017proximal} for RL-based optimization, wherein the necessary reward is derived from a previously trained reward model.

DPO \cite{rafailov2023direct} aligns LLM policies with human preferences via supervised learning, leveraging Eq.~(\ref{eq:rlhf}) to relate the optimal reward to the optimal policy directly.
\begin{multline}
\label{eq:opt_reward}
r^*(x, y) = \beta \log \left( \frac{\pi^*(y|x)}{\pi_{\text{ref}}(y|x)} \right) + \beta \log Z(x),
\end{multline}
where \(Z(x)\) is the partition function.
The optimal reward function is fitted to the objective function of BT model, defining DPO loss as follows:
\begin{multline}
\label{eq:DPO}
\underset{{x, y_w, y_l \sim D}}{\mathbb{E}} \left[ - \log \sigma \left( \beta \log \frac{\pi_{\theta}(y_w | x)}{\pi_\text{ref}(y_w | x)} \right.\right.\\\left.\left.- \beta \log \frac{\pi_{\theta}(y_l | x)}{\pi_\text{ref}(y_l | x)} \right)\right],
\end{multline}
where $\sigma$ is logistic function.

SRO~\cite{liu2023statistical} criticizes the preference sampling method of DPO. Sampling data $y_w$ and $y_l$ from the $\pi^*$ is the optimal way for estimating $\pi_\theta$. However, all experiments in DPO use preference pairs not from the $\pi^*$ but from $\pi_\text{ref}$, and there is a lack of research into the implications of this approach. SRO proposes a solution by constructing an additional reward-ranking model to directly form preference pairs from an approximated optimal policy and statistical rejection sampling. 

\begin{figure*}[t]
\centerline{\includegraphics[width=\textwidth]{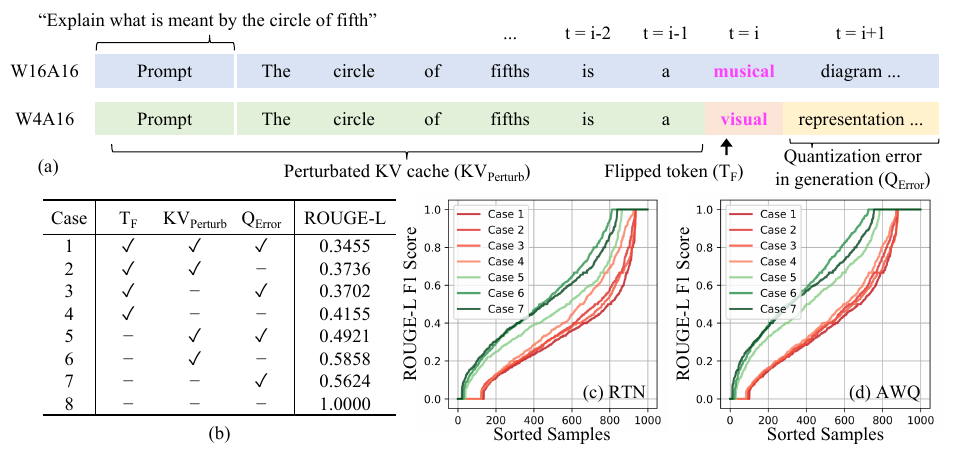}}
\vspace{-4mm}
\caption{(a) Breakdown of factors influencing sentence generation in quantized models. (b) Case study on the impact of each factor. The ROUGE-L score is used to measure changes in sentences. More results for ROUGE-1/2 are in Fig.~\ref{fig:rouge-1-2}. (c-d) Case-wise ROUGE scores in models where W4A16 PTQ is applied with (c) RTN and (d) AWQ.}
\label{fig:breakdown}
\end{figure*}

\section{Conversational Abilities of Quantized LLMs}
\label{sec:analysis}
\subsection{Observations}
Recent advancements in PTQ have demonstrated that 4-bit quantized LLMs are effective for a variety of tasks, as evidenced by references such as AWQ and OPTQ~\cite{lin2023awq,frantar2023optq}. However, our observations reveal that these quantized LLMs struggle to sustain engaging conversations, particularly in multi-turn chatbot interactions. For instance, Fig.~\ref{fig:generation} illustrates the contrast between the 16-bit baseline and 4-bit quantized LLMs in sentence generation. The baseline model begins its responses with ``The circle of fifths is a musical diagram,'' providing relevant answers. On the other hand, the 4-bit quantized model starts to deviate at the seventh token, switching its focus from ``musical'' to ``visual,'' and often generates limited and repetitive phrases. Although both models display similar task performance metrics, such as accuracy in multiple-choice benchmarks, there's a noticeable difference in the logit probability for the seventh token in the 4-bit model, causing a change in the token from ``musical'' to ``visual.'' This issue of altered text generation, observed across multiple examples (see \ref{appendix:examples} for additional examples), prompts an investigation 
into its underlying causes.

\subsection{Breakdown Analysis}
\label{sec:conversation:breakdown}
To understand the cause of altered text generation in quantized LLMs, we examine how quantization impacts text production. We pinpoint the initial deviation to a flipped token and identify three contributing factors as shown in Fig.~\ref{fig:breakdown}(a):
\begin{itemize}
  \item[-] \textbf{Flipped token ($\mathrm{T_F}$):} Occurs when a quantized model selects a different token at timestep  $t=i$ compared to the baseline, altering the input for subsequent token generation and leading to deviations. 
  \item[-] \textbf{Perturbated KV cache ($\mathrm{KV_{Perturb}}$):} Despite identical token sequences up to timestep $t=i-1$, quantization errors already affect the Transformer's key-value caches, contributing to further deviations. 
  \item[-] \textbf{Quantization error in generation ($\mathrm{Q_{Error}}$):} Starting from timestep $t=i+1$, ongoing quantization errors continue to influence token generation, causing further divergence from the baseline.
\end{itemize}

\textbf{Setup.} 
To evaluate the impact of each identified factor, we analyze eight possible scenarios shown in Fig.~\ref{fig:breakdown}(b). Case 1, where all three factors are present, mirrors the standard text generation of a quantized LLM, whereas Case 8, devoid of these factors, corresponds to the baseline model's inference. For this analysis, we generate text using both 4-bit and 16-bit models with 1,000 instruction samples randomly chosen from the Alpaca dataset~\cite{alpaca}. We record the first token where discrepancies in text generation between the two models occurred, along with the key-value cache status up to that point for each scenario. To quantify the deviation from the baseline text, we utilize the ROUGE-L~\cite{lin-2004-rouge} as a metric (the higher the better). More details of the implementation for the breakdown analysis are provided in \ref{appendix:breakdown}.

\begin{figure}[t]
\centerline{\includegraphics[width=\columnwidth]{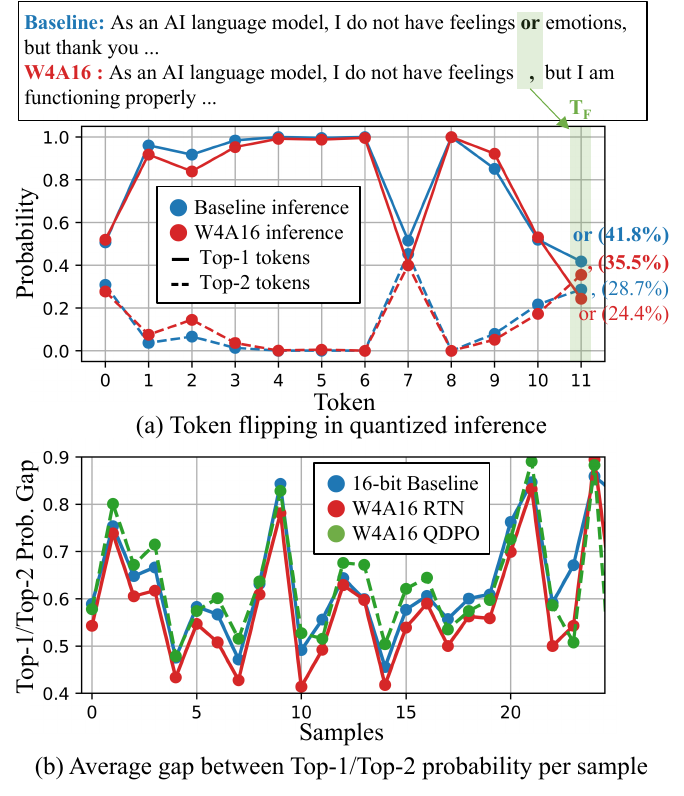}}
\caption{(a) Auto-regressive inference probabilities for baseline and quantized models, token by token. (b) Difference in average probability between top-1 and top-2 tokens per sample (Mi:dm, from MT-Bench). See Fig.~\ref{fig:prob_gap_awq} for more on the AWQ case.}
\label{fig:flip_token}
\vspace{-2.5mm}
\end{figure}

\textbf{Results.} To assess the contribution of each factor to deviations in sentence generation from the baseline, we contrast each scenario with Case 8. As depicted in Fig.~\ref{fig:breakdown}(b), sentences become more divergent with the inclusion of additional factors. Specifically, from Case 4, it is evident that the Flipped Token ($\mathrm{T_F}$) significantly affects sentence variation, as indicated by the largest decrease in ROUGE scores. Conversely, the effects of perturbed KV cache ($\mathrm{KV_{Perturb}}$) and quantization error in generation ($\mathrm{Q_{Error}}$) are comparatively minor. This pattern is further highlighted in Fig.~\ref{fig:breakdown}(c), where ROUGE scores, sorted by samples, show that Cases 1-4 cluster on the right, signifying greater deviation. This suggests that even a single token difference, resulting from quantization-affected probability shifts, can substantially alter the overall sentence structure in quantized inference.

\textbf{Ablation: Advanced Quantization. } Advanced quantization techniques designed to minimize errors may not fully address the issue of deviated text generation caused by flipped tokens. Recent PTQ methods that employ calibration using a small sample by scaling weight channels or adjusting quantization step sizes~\cite{frantar2023optq,lin2023awq,enhancing} aim to lessen layer-specific quantization errors. However, our observations indicate that while these calibrated PTQ models reduce quantization error effects, they do not mitigate the issues stemming from flipped tokens. The case study for a 4-bit quantized model calibrated with AWQ~\cite{lin2023awq}, shown in Fig.~\ref{fig:breakdown}(d), reveals that although calibration decreases the impacts of $\mathrm{KV_{Perturb}}$ and $\mathrm{Q_{Error}}$, sentence variations are still predominantly influenced by $\mathrm{T_F}$. A similar trend can be observed by KD-based QAT (Fig.~\ref{fig:rouge-kd}), highlighting the need for strategies that specifically address flipped tokens.

\subsection{Why Token-Flipping Happens?}
We hypothesize that token-flipping occurs due to inherently ambiguous token distributions in sentence generation, which become prone to flipping when quantization errors introduce alterations. To empirically validate this, Fig.~\ref{fig:flip_token}(a) demonstrates token-flipping during text generation by a quantized model. It shows the probabilities for the top-1 and top-2 tokens throughout the auto-regressive generation. Notably, the 16-bit baseline and 4-bit quantized models produce nearly identical probabilities for most tokens. However, at certain points (e.g., $t=0, 7, 11$), the probability margin between the top-1 and top-2 tokens is minimal. Token-flipping occurs when quantization-induced deviations in the probability distribution surpass this narrow margin, altering subsequent sentence generation and leading to unnatural phrasing.

Fig.~\ref{fig:flip_token}(b) shows the average probability margin between the top-1 and top-2 tokens across each text sample. By feeding identical inputs to each model, we note that the 4-bit quantized model has a narrower average probability margin between the top-1 and top-2 tokens than the 16-bit baseline. This indicates a higher likelihood of the 4-bit model experiencing token-flipping due to quantization error-induced deviations exceeding this margin. Additionally, our examination of beam search~\cite{graves2012sequence} in Section~\ref{sec:ablation} reveals its limited effectiveness in mitigating this issue. This underscores the need for strategies that ensure the quantized model retains clear decision-making capabilities.

\section{QDPO: Quantization-aware Direct Preference Optimization}
\label{sec:qdpo}



As described in Section~\ref{sec:analysis}, quantization significantly degrades the conversational ability of LLMs. To address this issue, we introduce an algorithm named Quantization-aware Direct Preference Optimization (QDPO), which aims to align the conversational abilities of quantized LLMs with those of LLMs prior to quantization. 
QDPO has two main contributions: 1) Providing an efficient method for generating the dataset \(\mathcal{D}_\text{{QDPO}}\) without costly human annotations. 2) Offering a theoretical foundation that ensures the automatic distinction of preferences during dataset generation.

\subsection{Method}

Drawing inspiration from the success of DPO in aligning LLMs with human preferences, we have developed a novel approach that extends its application to overcome the challenges introduced by quantization.

The challenge in preference dataset generation arises from human labeling. 
To mitiagte this, we introduce for efficiently creating dataset \(\mathcal{D}_{\text{QDPO}}\), which is composed of triplets \(\{y_w, y_l, x\}\). Here, \(y_w\) denotes the response from the full-precision model \(\pi_\text{fp}\), which is also referred to as the optimal policy. \(y_l\) represents the corresponding response from the quantized model \(\pi_{\text{q}}\). \(x\) serves as the prompt. Specifically, \(y_w\) is obtained as \(\arg\max_y{\pi_\text{fp}(y|x)}\) and \(y_l\) as \(\arg\max_y{\pi_\text{q}(y|x)}\).
The preference of \(y_w\) over \(y_l\) is ensured by Theorem~\ref{thm:D_QDPO}. The proof is in \ref{proof2}. Unlike conventional DPO methods, QDPO automatically distinguishes preferences without relying on expensive human-annotated datasets.

\begin{theorem}
    \label{thm:D_QDPO}
     For any response \(y\) in the set of all possible responses \(Y\), if \(y_1 = \arg\max_{y \in Y} \pi_\text{fp}(y|x)\) and \(y_2 = \arg\max_{y \in Y} \pi_\text{q}(y|x)\), then it is guaranteed that \(p^*(y_1 \succ y_2) \geq p^*(y_2 \succ y_1)\).
\end{theorem}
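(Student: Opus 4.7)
The plan is to convert the claim about $p^*$ into an equivalent claim about the optimal reward $r^*$, then exploit the DPO identity (Eq.~\ref{eq:opt_reward}) that expresses $r^*$ as a log-ratio of the optimal and reference policies. Concretely, since the Bradley--Terry probability in Eq.~\ref{eq:BT} is a strictly increasing function of $r^*(x,y_1)-r^*(x,y_2)$, the inequality $p^*(y_1 \succ y_2) \geq p^*(y_2 \succ y_1)$ is equivalent to $r^*(x,y_1) \geq r^*(x,y_2)$. This is the first step I would carry out, because it cleanly removes the softmax and the partition function from the target inequality.

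Next, I would invoke the DPO identification used throughout QDPO: $\pi^{*}=\pi_\text{fp}$ (the full-precision model plays the role of the optimal policy) and $\pi_\text{ref}=\pi_\text{q}$ (the quantized model serves as the reference the training policy is initialized from). Substituting into Eq.~\ref{eq:opt_reward}, the $\beta \log Z(x)$ term cancels when taking the difference $r^*(x,y_1)-r^*(x,y_2)$, and after dividing by $\beta>0$ the target reduces to
\begin{equation*}
\log\frac{\pi_\text{fp}(y_1|x)}{\pi_\text{q}(y_1|x)} \;\geq\; \log\frac{\pi_\text{fp}(y_2|x)}{\pi_\text{q}(y_2|x)},
\end{equation*}
which, after exponentiating and cross-multiplying, is equivalent to
\begin{equation*}
\pi_\text{fp}(y_1|x)\,\pi_\text{q}(y_2|x) \;\geq\; \pi_\text{fp}(y_2|x)\,\pi_\text{q}(y_1|x).
\end{equation*}

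The final step is the cleanest: by the definitions $y_1=\arg\max_y \pi_\text{fp}(y|x)$ and $y_2=\arg\max_y \pi_\text{q}(y|x)$, we have $\pi_\text{fp}(y_1|x)\geq \pi_\text{fp}(y_2|x)$ and $\pi_\text{q}(y_2|x)\geq \pi_\text{q}(y_1|x)$. Since all four quantities are non-negative probabilities, multiplying the two inequalities yields the desired product inequality, closing the argument.

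I expect the main obstacle to be conceptual rather than computational: one has to commit to the role assignment $\pi^*\!\leftarrow\!\pi_\text{fp}$ and $\pi_\text{ref}\!\leftarrow\!\pi_\text{q}$, and be careful that this is the identification actually implied by the QDPO setup (where the full-precision model defines the preferred responses and the quantized model is the starting policy to be aligned). A minor technical caveat is that the argument as stated assumes the $\arg\max$ is well-defined; if ties occur one should pick any maximizer, in which case the inequalities above still hold with equality possible, still giving $p^*(y_1\succ y_2)\geq p^*(y_2\succ y_1)$.
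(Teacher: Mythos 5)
Your proof is correct and follows essentially the same route as the paper's: use the $\arg\max$ definitions to get $\pi_\text{fp}(y_1|x)\geq\pi_\text{fp}(y_2|x)$ and $\pi_\text{q}(y_2|x)\geq\pi_\text{q}(y_1|x)$, substitute the DPO reward identity with $\pi^*=\pi_\text{fp}$, $\pi_\text{ref}=\pi_\text{q}$ into the Bradley--Terry form, and conclude by monotonicity that $p^*(y_1\succ y_2|x)\geq 0.5$. Your cross-multiplied product form and the remark about ties are minor refinements of the same argument, not a different approach.
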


By precisely distinguishing preferences, we can clearly eliminate errors in data labeling. This leads to improved performance by accurately estimating the policy model's density. \(\mathcal{L}_{\text{QDPO}}\) is define with high-quality preference data \(\mathcal{D}_{\text{QDPO}}\) as follows:

\begin{multline}
\label{eq:QDPO}
\underset{\substack{x \sim \mathcal{D}_\text{{QDPO}} \\ y_{w}\sim \pi_\text{fp},y_{l}\sim \pi_\text{q}}}{\mathbb{E}}\left[ - \log \sigma \left( \beta \log \frac{\pi_{\theta}(y_w | x)}{\pi_{\text{q}}(y_w | x)}\right.\right. \\\left.\left.- \beta \log \frac{\pi_{\theta}(y_l | x)}{\pi_{\text{q}}(y_l | x)} \right) \right].
\end{multline}

\begin{figure}[t]
\centerline{\includegraphics[width=\columnwidth]{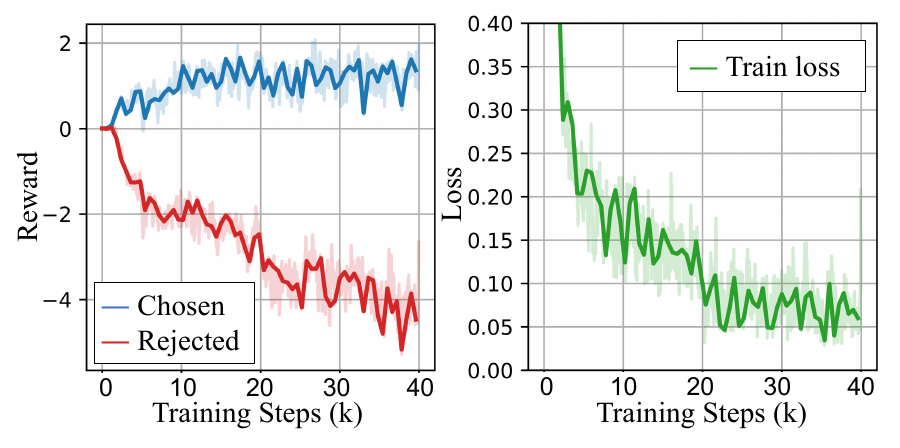}}
\vspace{-3mm}
\caption{Training dynamics of QDPO showing chosen and rejected rewards (left), and loss (right) across steps.}
\label{fig:qdpo_loss}
\end{figure}

\begin{algorithm}[t]
\label{algo:alg1}
\caption{Quantization-aware DPO}
\begin{algorithmic} 
\State \textbf{Input:} prompt $\{x_1, x_2, \dots, x_N\}$, full precision policy $\pi_\text{fp}$, quantized policy $\pi_\text{q}$, KL penalty $\beta$
\State \textbf{Output:} Updated policy $\pi_\theta$

\State Initialize $\pi_\theta$ from $\pi_\text{q}$
\State Preference pairs dataset $\mathcal{D}_\text{{QDPO}} = \emptyset$

\For{$i = 1$ to $N$}
    \State $y^i_w =\arg\max_y{\pi_\text{fp}(y|x^i)}$
    \State $y^i_l = \arg\max_y{\pi_\text{q}(y|x^i)}$
    \State Add pair $\{y^i_w, y^i_l, x^i\}$ to $\mathcal{D}_\text{{QDPO}}$
\EndFor

\For{each pair $\{y_w, y_l, x\}$ in $\mathcal{D}_\text{{QDPO}}$}
    \State Calculate $\mathcal{L}_\text{QDPO}$ from eq.~(\ref{eq:QDPO})    
    \State Calculate the gradient with respect to $\theta$
    \State $\frac{\partial \mathcal{L}_\text{QDPO}}{\partial \theta} = \frac{\partial \mathcal{L}_\text{QDPO}}{\partial \theta_\text{q}} \cdot \frac{\partial \theta_\text{q}}{\partial \theta}  \underset{\text{{STE}}}{\approx} \frac{\partial \mathcal{L}_\text{QDPO}}{\partial \theta_\text{q}} $
    \State Update $\pi_\theta$ by minimizing $\mathcal{L}_\text{QDPO}$
\EndFor

\State \textbf{return} Updated policy $\pi_\theta$
\end{algorithmic}
\end{algorithm}
\subsection{Implementation}

Given $\pi_{\theta}$ as the quantized model's policy, integrating $\mathcal{L}_\text{QDPO}$ with QAT adjusts for quantization effects. The quantization technique we employ uniformly quantizes each channel across its entire min-max range, ensuring comprehensive accommodation of the full spectrum of values within each channel. To overcome the challenge posed by the non-differentiable rounding within the quantization process, we employ the Straight-Through Estimator (STE) for gradient approximation, facilitating effective gradient approximation and ensuring smooth training despite quantization. As shown in Fig.~\ref{fig:qdpo_loss}, QDPO demonstrates an increase in the chosen reward and a decrease in the rejected reward throughout the training process, indicating effective loss convergence. The complete procedure of QDPO is described in Algorithm 1. Details of the training settings and hyperparameters for QDPO can be found in \ref{appendix:training_details}.

\section{Experiments}
\label{sec:experiments}
\subsection{Experimental Settings}


\textbf{\quad Models.} We evaluate QDPO on two representative conversational LLMs. Vicuna-v1.5~\cite{zheng2023judging}, instruction-finetuned from LLaMA2 for improved conversational ability, and a bilingual (English-Korean) LLM, Mi:dm~\cite{midm}, to confirm QDPO's effectiveness to support multiple languages. All these models have 7B parameters.

\textbf{Benchmarks.} For a comprehensive evaluation of conversational abilities, we employ three distinct benchmarks: MT-Bench~\cite{zheng2023judging}, Vicuna-Eval~\cite{vicuna2023}, and FLASK~\cite{ye2023flask}. MT-Bench utilizes GPT-4 to evaluate the quality of two responses obtained from an initial question and an additional follow-up question, offering an evaluation of multi-turn responses. For assessing Korean capability, we also translate the MT-Bench dataset into Korean using GPT-4. Vicuna-Eval consists of 80 questions for evaluation by GPT-4 to determine which model generates better sentences. FLASK includes 1.7K samples designed to assess LLM's fine-grained language skills, such as robustness and harmlessness.

\textbf{Quantization Methods.} To understand the impact of quantization on conversational abilities, we consider variations of quantization methods:
\begin{itemize}[itemsep=-1pt, topsep=0pt]
    \item[-] Baseline: 16-bit floating-point weight
    \item[-] RTN~\cite{jacob2018quantization}: 4-bit round-to-nearest weight quantization
    \item[-] AWQ~\cite{lin2023awq}: 4-bit RTN with weight scaling for improved quantization
    \item[-] KD~\cite{liu2023llmqat}: 4-bit quantization-aware training with knowledge distillation (KD) loss from Baseline
    \item[-] QDPO (Ours): 4-bit RTN with QDPO for improved conversational abilities
\end{itemize}
Details of the experimental settings for each case can be found in \ref{appendix:training_details}.

\begin{table}[t]
\centering
\resizebox{0.9\columnwidth}{!}{%
\begin{tabular}{c|c|c|ccc|c}
\Xhline{3\arrayrulewidth}
Lang.                 & Model                                                                   & Method & Win                 & Tie                 & Lose                 & Lose-rate ↓           \\ \midrule
\multirow{7}{*}{Eng} & \multirow{4}{*}{Mi:dm}                                                   & RTN    & 24                   & 6                    & 66                    & 0.69                 \\
                      &                                                                         & AWQ    & 28                   & 9                    & 52                    & 0.58                 \\
                      &                                                                         & KD     & 31                   & 16                   & 52                    & 0.53                 \\
                      &                                                                         & QDPO   & 53                   & 14                   & 44                    & \textbf{0.40}        \\ \cline{2-7} 
                      & \multirow{3}{*}{Vicuna}                                                 & RTN    & 26                   & 22                   & 73                    & 0.60                 \\
                      &                                                                         & AWQ    & 39                   & 22                   & 47                    & \textbf{0.44}        \\
                      &                                                                         & QDPO   & 40                   & 27                   & 53                    & \textbf{0.44}        \\ \midrule
                    
\multirow{3}{*}{Kor}  & \multirow{3}{*}{Mi:dm}                                                   & RTN    & 29                   & 7           & 55                    & 0.60                 \\
                      &                                                                         & AWQ    & 25                   & 5                    & 48                    & 0.62                 \\
                      &                                                                         & QDPO   & 45                   & 4                    & 61                    & \textbf{0.55}        \\ \Xhline{3\arrayrulewidth}
\end{tabular}%
}
\caption{Pairwise comparison results in MT-Bench between W4A16 quantized LLMs and 16-bit baseline model.}
\label{tab:mtbench_pair}
\end{table}

\begin{table}[t]
\centering
\resizebox{0.9\columnwidth}{!}{%
\begin{tabular}{c|c|ccc}
\Xhline{3\arrayrulewidth}
\multirow{2}{*}{Category} & \multirow{2}{*}{16-bit Inference} & \multicolumn{3}{c}{W4A16 Inference} \\
                          &                                   & RTN     & AWQ     & QDPO            \\ \midrule
Writing                   & 5.82                              & 4.13    & 5.39    & 4.74            \\
Roleplay                  & 5.61                              & 5.53    & 5.00    & 5.13            \\
Reasoning                 & 3.37                              & 3.06    & 3.61    & 4.31            \\
Math                      & 1.71                              & 1.45    & 1.60    & 1.40            \\
Coding                    & 1.11                              & 1.56    & 1.16    & 2.28            \\
Extraction                & 3.63                              & 2.56    & 3.50    & 3.08            \\
STEM                      & 5.24                              & 4.39    & 4.68    & 5.69            \\
Humanities                & 6.26                              & 5.75    & 5.00    & 5.63            \\ \midrule
Average                   & 4.09                              & 3.55    & 3.74    & \textbf{4.03}   \\ \Xhline{3\arrayrulewidth}
\end{tabular}%
}
\caption{Category-wise scores of quantized LLMs according to MT-Bench single-answer grading.}
\label{tab:mtbench_single_midm}
\end{table}
\subsection{Experimental Results: MT-Bench}
We evaluate quantized LLMs on MT-Bench to understand the impact of different quantization methods on conversational abilities. Following convention~\cite{zheng2023judging}, we report both pairwise comparison and single-answer grading results (\ref{appendix:mt-bench-score} for detailed evaluation metrics).

\textbf{Pairwise Comparison. } 
Table~\ref{tab:mtbench_pair} shows the results of pairwise comparison on MT-Bench for various quantized LLMs. Each quantized LLM is compared with the Baseline (16-bit weight) by GPT-4 for their multi-turn responses to the questions in various categories of MT-Bench. We focus on the lose-rate since our alignment objective is to improve the answer quality of the quantized LLM superior to (win) or comparable with (tie) the 16-bit weight baseline. In all the cases, RTN suffers from the highest lose-rate compared to AWQ due to its simplest quantization mechanism. However, lose-rate of the same RTN can be significantly improved by QDPO; QDPO achieves the lowest lose-rate in all the cases. We can further compare QDPO with KD as they finetune the model weights to be quantization-friendly. Interestingly, QDPO outperforms KD with a noticeable increase in winning cases. These results showcase that QDPO can effectively align the answer quality to the 16-bit weight baseline.

\textbf{Single-Answer Grading.} Table \ref{tab:mtbench_single_midm} presents the single-answer grading results of Mi:dm on MT-Bench across eight categories, each with 10 questions, and reports the average GPT-4 rating (higher is better). Throughout the categories, RTN suffers from the lowest grading due to the quantization errors, which can be marginally improved by AWQ. In contrast, QDPO significantly improves the average grading from RTN, achieving the average grading on par with the 16-bit weight baseline. This also highlights the effectiveness of QDPO in recovering conversational abilities. Details on category-wise analysis can be found in \ref{appnedix:mt-bench-category-analysis}. 

\subsection{Experimental Results: Vicuna-Eval}

\begin{figure}[t]
\leftline{\includegraphics[width=\columnwidth]{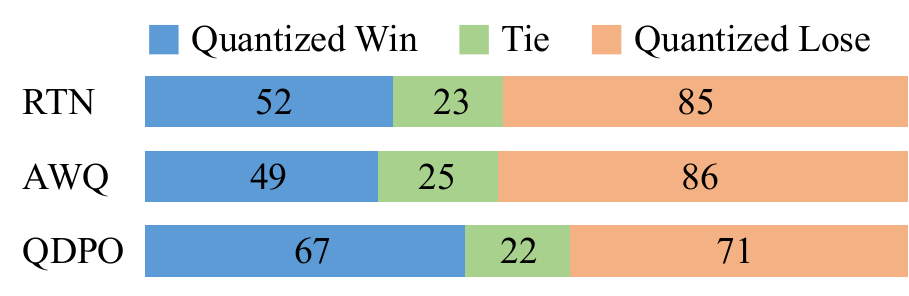}}
\caption{Vicuna-Eval results on Mi:dm.}
\label{fig:vicuna_eval}
\end{figure}

Since Vicuna-Eval is a widely used benchmark for evaluating conversational abilities, we further employ it for evaluating QDPO. We take Mi:dm as a target language model to apply different quantization methods and evaluate its performance on 80 questions by GPT-4. As shown in Fig.~\ref{fig:vicuna_eval}, it can be seen that models with QDPO applied exhibit the highest wins and the lowest losses, demonstrating a lose-rate of 50\%, which indicates a near recovery of the language capabilities of the baseline model.

\subsection{Experimental Results: FLASK}
We use the FLASK benchmark on Mi:dm to verify how the proposed method enhances the fine-grained skills of the language model. Fig.~\ref{fig:flask} shows the relative performance of different quantized LLMs across the 12 fine-grained skills. RTN significantly diminishes certain capabilities of the model, while AWQ and KD slightly improve performance toward the 16-bit weight baseline. In contrast, QDPO shows a significant enhancement in most skills; in particular, QDPO significantly improves metacognition skills, whereas RTN and AWQ significantly fall short. (Details on skill-wise analysis can be found in \ref{appnedix:flask-skill-examples}.) Overall, QDPO achieves the abilities closest to the 16-bit weight baseline, showcasing the effectiveness of its alignment objective in recovering conversational skills. 

\begin{table}[t]
\resizebox{\columnwidth}{!}{%
\begin{tabular}{c|cc|cc|c}
\Xhline{3\arrayrulewidth}
Method   & CSQA  & MMLU  & DROP  & BBH   & MT-bench \\ \midrule
Baseline & 75.16 & 46.55 & 24.95 & 34.23 & 4.07     \\ \midrule
RTN      & 73.87 & 42.46 & 21.81 & 32.57 & 3.52     \\
RTN+QDPO & 73.11 & 42.69 & 21.50 & 32.05 & 3.96     \\ \midrule
AWQ      & 74.75 & 45.06 & 24.07 & 32.63 & 3.75     \\
AWQ+QDPO & 74.29 & 44.99 & 24.12 & 32.76 & 3.87     \\ \Xhline{3\arrayrulewidth}
\end{tabular}%
}
\caption{W4A16 inference results on conventional benchmarks (Mi:dm).}
\label{tab:benchmarks}
\end{table}
\begin{figure}[t]
\centerline{\includegraphics[width=\columnwidth]{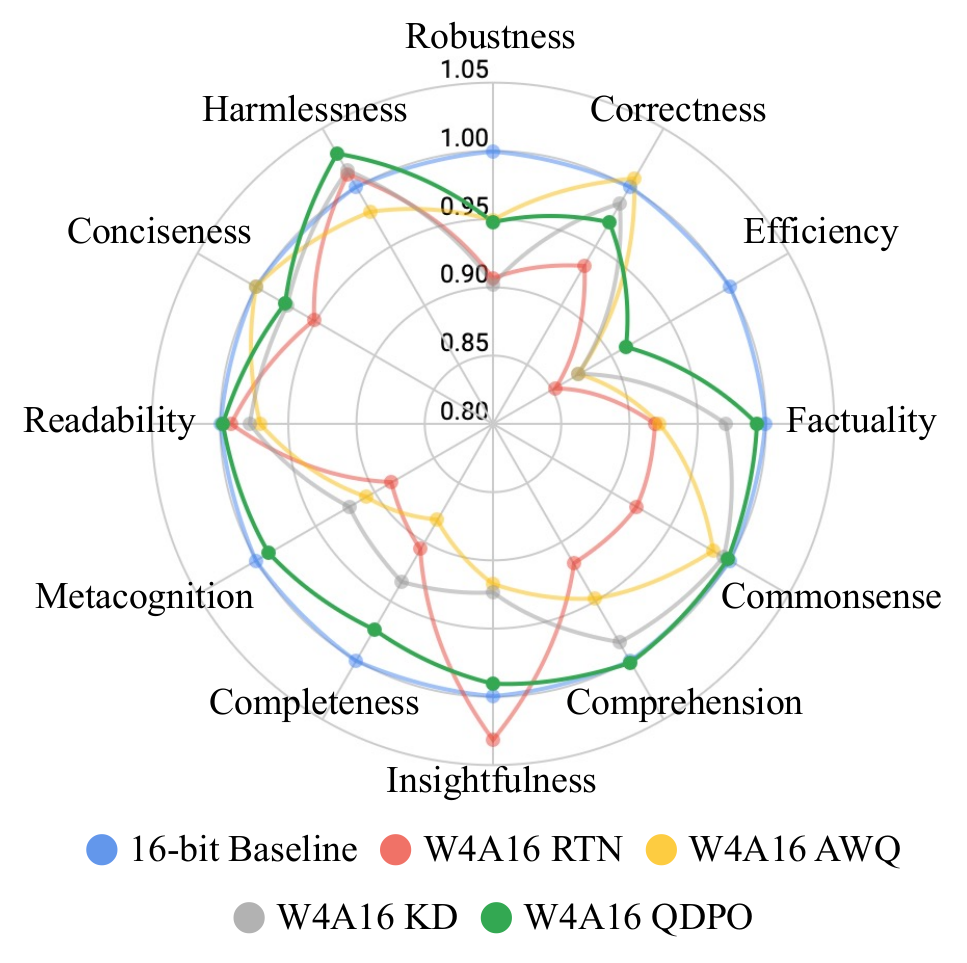}}
\caption{Performance relative to baseline on FLASK. Absolute performance results can be found in Table~\ref{tab:flask}.}
\vspace{-3mm}
\label{fig:flask}
\end{figure}

\subsection{Ablation Studies}
\label{sec:ablation}
We further conduct ablation studies to provide insights on QDPO for improving the conversational skills of quantized LLMs. 

\textbf{Conversation Abilities vs. Task Accuracy. }
As discussed, QDPO has the particular role of aligning the quantized LLMs to the 16-bit weight baseline. What is the impact of this alignment on the task-specific performance of LLMs? To answer this question, we further evaluate the quantized LLMs on well-known benchmarks that test the task-specific capability of language models. In particular, Common Sense Question Answering (CSQA)~\cite{talmor2019commonsenseqa} and Massive Multitask Language Understanding (MMLU)~\cite{mmlu} assess the models' reasoning and multitask-solving abilities through multiple-choice questions. Furthermore, DROP~\cite{dua-etal-2019-drop} and BBH~\cite{srivastava2023beyond_bbh} evaluate the problem-solving abilities of instruction-tuned models in logic and math. (Details on the task-specific benchmarks are in \ref{appnedix:task-specific-benchmarks}.) Table~\ref{tab:benchmarks} compares the task accuracy (CSQA, MMLU, DROP, BBH) as well as the conversational abilities (MT-Bench) on the quantized LLM with and without QDPO. RTN suffers degradation on the task accuracy as well as the conversational abilities. Interestingly, AWQ significantly improves task accuracy while its conversational abilities are marginally improved. Meanwhile, QDPO improves conversational ability while mostly preserving task accuracy, showcasing its usefulness.


\textbf{Conversation Abilities vs. Perplexity. }
\begin{table}[t]
\centering
\resizebox{\columnwidth}{!}{%
\begin{tabular}{c|c|c|cc}
\Xhline{3\arrayrulewidth}
Languange & Model & Method & PPL ↓ & Lose-rate ↓ \\ \midrule
\multirow{8}{*}{English} & \multirow{4}{*}{Mi:dm} & 16-bit Baseline & 13.12 & - \\ \cline{3-5} 
 &  & RTN & 15.16 & 0.69 \\
 &  & AWQ & \textbf{14.23} & 0.58 \\
 &  & QDPO & 15.55 & \textbf{0.40} \\ \cline{2-5} 
 & \multirow{4}{*}{Vicuna} & 16-bit Baseline & 6.78 & - \\ \cline{3-5} 
 &  & RTN & 7.53 & 0.60 \\
 &  & AWQ & \textbf{7.34} & \textbf{0.44} \\
 &  & QDPO & 7.36 & \textbf{0.44} \\ \midrule
\multirow{4}{*}{Korean} & \multirow{4}{*}{Mi:dm} & 16-bit Baseline & 5.71 & - \\ \cline{3-5} 
 &  & RTN & 6.52 & 0.60 \\
 &  & AWQ & \textbf{5.97} & 0.62 \\
 &  & QDPO & 6.56 & \textbf{0.55} \\ \Xhline{3\arrayrulewidth}
\end{tabular}%
}
\caption{Perplexity (PPL) evaluation and lose-rate from MT-bench for W4A16 quantized LLMs.}
\vspace{-2mm}
\label{tab:ppl}
\end{table}
Perplexity is a key metric for evaluating language models, as it measures the exponentiated average negative log probability of predicted word sequences. We examine whether the enhanced conversational capabilities through QDPO are also reflected in perplexity by comparing the perplexity and the lose-rate on the MT-bench. We measure perplexity using Wikitext-2~\cite{merity2016pointerwiki} for English and Korean textbooks\footnote{\url{https://huggingface.co/datasets/maywell/korean_textbooks}} dataset for Korean. As shown in Table~\ref{tab:ppl}, RTN significantly increases perplexity across all models. While AWQ decreases perplexity in all models, it does not guarantee an improvement in conversational ability. For example, in Mi:dm’s Korean benchmark, AWQ significantly reduces perplexity by 0.55 compared to RTN, yet the lose-rate increases by 2\%. On the other hand, QDPO significantly enhances conversational ability, even though it does not achieve as low a perplexity as the baseline. We believe that the discrepancy between perplexity and conversational ability stems from the difficulty of using next-word prediction perplexity on reference text to capture the impact of flipped tokens in an auto-regressive generation. From our observation, these tokens significantly contribute to sentence variation, as discussed in Sec.~\ref{sec:conversation:breakdown}.

\textbf{QDPO vs. Beam Search. }
\begin{table}[t]
\resizebox{\columnwidth}{!}{%
\begin{tabular}{c|c|ccc|c}
\Xhline{3\arrayrulewidth}
Method               & Number of Beams & Win         & Tie         & Lose        & Lose-rate ↓ \\ \midrule
RTN                  & 1               & 24          & 6           & 66          & 0.69      \\ \midrule
\multirow{3}{*}{AWQ} & 1               & 28          & 9           & 52          & 0.58      \\
                     & 3               & 38          & 9           & 50          & 0.52      \\
                     & 5               & 35          & 10          & 61          & 0.58      \\ \midrule
QDPO                 & 1               & 53 & 44 & 14 & \textbf{0.40}      \\ \Xhline{3\arrayrulewidth}
\end{tabular}%
}
\caption{Impact of decoding strategy.}
\vspace{-2mm}
\label{tab:beam_search}
\end{table}
Beam search \cite{graves2012sequence} generates higher probability outcomes by considering multiple generation possibilities simultaneously. Therefore, even if the quantized model makes different judgments from the baseline, which significantly influences sentence generation, there is still a possibility of generating outcomes without issues in overall probability. We aim to observe how decoding strategies affect quantized generation across three beam sizes (1, 3, 5). Table~\ref{tab:beam_search} shows the results of the MT-Bench pairwise comparison according to decoding strategies. In the case of a beam size of 3, generating a more diverse range of sentences slightly reduces the lose-rate, yet it still exhibits many losses, and increasing the beam size further does not fundamentally solve the problem, as it also increases defeats. In contrast, QDPO demonstrates a lower lose-rate by creating models that are robust to quantization.

\section{Conclusion}
\label{sec:conclusion}

In this work, we address the conversational abilities of quantized LLM-based chatbots. After identifying token-flipping as a crucial factor for degraded text generation quality, we propose a novel quantization-aware direct preference optimization (QDPO) method that effectively aligns quantized and full-precision LLMs, enhancing conversational performance. Tested across multiple languages and models, QDPO outperforms traditional fine-tuning techniques, setting a new benchmark for conversational chatbot development. 

\section*{Acknowledgement}
This work was partly supported by Institute of Information \& communications Technology Planning \& Evaluation (IITP) grant funded by the Korea government (MSIT) (No. RS-2020-II201373, Artificial Intelligence Graduate School Program (Hanyang University), and 2022-0-00971, Logic Synthesis for NVM-based PIM Computing Architecture) and National Research Foundation of Korea (NRF) (No. RS-2023-00260527). This work was also partly supported by Artificial Intelligence Industrial Convergence Cluster Development Project, funded by the Ministry of Science and ICT (MSIT, Korea) \& Gwangju Metropolitan City, and the research fund of Hanyang University (HY-201900000002966).
\section*{Limitations}
Our objective is to align the language capabilities of a baseline model distorted by quantization error through DPO. We focus on exploring scenarios where quantization error does not completely ruin conventional benchmarking performance yet introduces subtle differences in language capabilities that are perceptible to humans. Hence, we do not address situations where large quantization errors significantly degrade model performance, nor do we deal with cases using fine-grained quantization where quantization error is minimal. However, from a practical standpoint, the challenge of reducing the inference cost of LLMs by transitioning to lower bit-precision is necessary, and this process should consider various techniques, including group quantization. Additionally, since our approach involves aligning the baseline model with a relatively minimal training process, there are limitations in utilizing extensive datasets. Nonetheless, the impact of different datasets when aligning the baseline model with a limited number of bits remains an intriguing topic.



\bibliography{anthology,custom}

\begin{thebibliography}{47}
\expandafter\ifx\csname natexlab\endcsname\relax\def\natexlab#1{#1}\fi

\bibitem[{Askell et~al.(2021)Askell, Bai, Chen, Drain, Ganguli, Henighan, Jones, Joseph, Mann, DasSarma, Elhage, Hatfield-Dodds, Hernandez, Kernion, Ndousse, Olsson, Amodei, Brown, Clark, McCandlish, Olah, and Kaplan}]{askell2021general_hhh}
Amanda Askell, Yuntao Bai, Anna Chen, Dawn Drain, Deep Ganguli, Tom Henighan, Andy Jones, Nicholas Joseph, Ben Mann, Nova DasSarma, Nelson Elhage, Zac Hatfield-Dodds, Danny Hernandez, Jackson Kernion, Kamal Ndousse, Catherine Olsson, Dario Amodei, Tom Brown, Jack Clark, Sam McCandlish, Chris Olah, and Jared Kaplan. 2021.
\newblock \href {http://arxiv.org/abs/2112.00861} {A general language assistant as a laboratory for alignment}.

\bibitem[{Bai et~al.(2022)Bai, Jones, Ndousse, Askell, Chen, DasSarma, Drain, Fort, Ganguli, Henighan et~al.}]{bai2022training}
Yuntao Bai, Andy Jones, Kamal Ndousse, Amanda Askell, Anna Chen, Nova DasSarma, Dawn Drain, Stanislav Fort, Deep Ganguli, Tom Henighan, et~al. 2022.
\newblock Training a helpful and harmless assistant with reinforcement learning from human feedback.
\newblock \emph{arXiv preprint arXiv:2204.05862}.

\bibitem[{Bisk et~al.(2019)Bisk, Zellers, Bras, Gao, and Choi}]{bisk2019piqa}
Yonatan Bisk, Rowan Zellers, Ronan~Le Bras, Jianfeng Gao, and Yejin Choi. 2019.
\newblock \href {http://arxiv.org/abs/1911.11641} {Piqa: Reasoning about physical commonsense in natural language}.

\bibitem[{Bradley and Terry(1952)}]{bradley1952rank}
Ralph~Allan Bradley and Milton~E Terry. 1952.
\newblock Rank analysis of incomplete block designs: I. the method of paired comparisons.
\newblock \emph{Biometrika}, 39(3/4):324--345.

\bibitem[{Brown et~al.(2020)Brown, Mann, Ryder, Subbiah, Kaplan, Dhariwal, Neelakantan, Shyam, Sastry, Askell, Agarwal, Herbert-Voss, Krueger, Henighan, Child, Ramesh, Ziegler, Wu, Winter, Hesse, Chen, Sigler, Litwin, Gray, Chess, Clark, Berner, McCandlish, Radford, Sutskever, and Amodei}]{gpt3}
Tom Brown, Benjamin Mann, Nick Ryder, Melanie Subbiah, Jared~D Kaplan, Prafulla Dhariwal, Arvind Neelakantan, Pranav Shyam, Girish Sastry, Amanda Askell, Sandhini Agarwal, Ariel Herbert-Voss, Gretchen Krueger, Tom Henighan, Rewon Child, Aditya Ramesh, Daniel Ziegler, Jeffrey Wu, Clemens Winter, Chris Hesse, Mark Chen, Eric Sigler, Mateusz Litwin, Scott Gray, Benjamin Chess, Jack Clark, Christopher Berner, Sam McCandlish, Alec Radford, Ilya Sutskever, and Dario Amodei. 2020.
\newblock \href {https://proceedings.neurips.cc/paper/2020/file/1457c0d6bfcb4967418bfb8ac142f64a-Paper.pdf} {Language models are few-shot learners}.
\newblock In \emph{Advances in Neural Information Processing Systems}, volume~33, pages 1877--1901. Curran Associates, Inc.

\bibitem[{Chiang et~al.(2023)Chiang, Li, Lin, Sheng, Wu, Zhang, Zheng, Zhuang, Zhuang, Gonzalez, Stoica, and Xing}]{vicuna2023}
Wei-Lin Chiang, Zhuohan Li, Zi~Lin, Ying Sheng, Zhanghao Wu, Hao Zhang, Lianmin Zheng, Siyuan Zhuang, Yonghao Zhuang, Joseph~E. Gonzalez, Ion Stoica, and Eric~P. Xing. 2023.
\newblock \href {https://lmsys.org/blog/2023-03-30-vicuna/} {Vicuna: An open-source chatbot impressing gpt-4 with 90\%* chatgpt quality}.

\bibitem[{Chung et~al.(2022)Chung, Hou, Longpre, Zoph, Tay, Fedus, Li, Wang, Dehghani, Brahma et~al.}]{chung2022scaling}
Hyung~Won Chung, Le~Hou, Shayne Longpre, Barret Zoph, Yi~Tay, William Fedus, Eric Li, Xuezhi Wang, Mostafa Dehghani, Siddhartha Brahma, et~al. 2022.
\newblock Scaling instruction-finetuned language models.
\newblock \emph{arXiv preprint arXiv:2210.11416}.

\bibitem[{Clark et~al.(2019)Clark, Lee, Chang, Kwiatkowski, Collins, and Toutanova}]{clark-etal-2019-boolq}
Christopher Clark, Kenton Lee, Ming-Wei Chang, Tom Kwiatkowski, Michael Collins, and Kristina Toutanova. 2019.
\newblock \href {https://doi.org/10.18653/v1/N19-1300} {{B}ool{Q}: Exploring the surprising difficulty of natural yes/no questions}.
\newblock In \emph{Proceedings of the 2019 Conference of the North {A}merican Chapter of the Association for Computational Linguistics: Human Language Technologies, Volume 1 (Long and Short Papers)}, pages 2924--2936, Minneapolis, Minnesota. Association for Computational Linguistics.

\bibitem[{Dettmers et~al.(2023)Dettmers, Pagnoni, Holtzman, and Zettlemoyer}]{dettmers2023qlora}
Tim Dettmers, Artidoro Pagnoni, Ari Holtzman, and Luke Zettlemoyer. 2023.
\newblock \href {https://openreview.net/forum?id=OUIFPHEgJU} {{QL}o{RA}: Efficient finetuning of quantized {LLM}s}.
\newblock In \emph{Thirty-seventh Conference on Neural Information Processing Systems}.

\bibitem[{Dua et~al.(2019)Dua, Wang, Dasigi, Stanovsky, Singh, and Gardner}]{dua-etal-2019-drop}
Dheeru Dua, Yizhong Wang, Pradeep Dasigi, Gabriel Stanovsky, Sameer Singh, and Matt Gardner. 2019.
\newblock \href {https://doi.org/10.18653/v1/N19-1246} {{DROP}: A reading comprehension benchmark requiring discrete reasoning over paragraphs}.
\newblock In \emph{Proceedings of the 2019 Conference of the North {A}merican Chapter of the Association for Computational Linguistics: Human Language Technologies, Volume 1 (Long and Short Papers)}, pages 2368--2378, Minneapolis, Minnesota. Association for Computational Linguistics.

\bibitem[{Frantar et~al.(2023)Frantar, Ashkboos, Hoefler, and Alistarh}]{frantar2023optq}
Elias Frantar, Saleh Ashkboos, Torsten Hoefler, and Dan Alistarh. 2023.
\newblock \href {https://openreview.net/forum?id=tcbBPnfwxS} {{OPTQ}: Accurate quantization for generative pre-trained transformers}.
\newblock In \emph{The Eleventh International Conference on Learning Representations}.

\bibitem[{Gao et~al.(2020)Gao, Biderman, Black, Golding, Hoppe, Foster, Phang, He, Thite, Nabeshima et~al.}]{gao2020pile}
Leo Gao, Stella Biderman, Sid Black, Laurence Golding, Travis Hoppe, Charles Foster, Jason Phang, Horace He, Anish Thite, Noa Nabeshima, et~al. 2020.
\newblock The pile: An 800gb dataset of diverse text for language modeling.
\newblock \emph{arXiv preprint arXiv:2101.00027}.

\bibitem[{Graves(2012)}]{graves2012sequence}
Alex Graves. 2012.
\newblock Sequence transduction with recurrent neural networks.
\newblock \emph{arXiv preprint arXiv:1211.3711}.

\bibitem[{Hendrycks et~al.(2020)Hendrycks, Burns, Basart, Zou, Mazeika, Song, and Steinhardt}]{mmlu}
Dan Hendrycks, Collin Burns, Steven Basart, Andy Zou, Mantas Mazeika, Dawn Song, and Jacob Steinhardt. 2020.
\newblock \href {http://arxiv.org/abs/2009.03300} {Measuring massive multitask language understanding}.
\newblock \emph{CoRR}, abs/2009.03300.

\bibitem[{Hu et~al.(2022)Hu, Shen, Wallis, Allen-Zhu, Li, Wang, Wang, and Chen}]{hu2022lora}
Edward~J Hu, Yelong Shen, Phillip Wallis, Zeyuan Allen-Zhu, Yuanzhi Li, Shean Wang, Lu~Wang, and Weizhu Chen. 2022.
\newblock \href {https://openreview.net/forum?id=nZeVKeeFYf9} {Lo{RA}: Low-rank adaptation of large language models}.
\newblock In \emph{International Conference on Learning Representations}.

\bibitem[{Jacob et~al.(2018)Jacob, Kligys, Chen, Zhu, Tang, Howard, Adam, and Kalenichenko}]{jacob2018quantization}
Benoit Jacob, Skirmantas Kligys, Bo~Chen, Menglong Zhu, Matthew Tang, Andrew Howard, Hartwig Adam, and Dmitry Kalenichenko. 2018.
\newblock Quantization and training of neural networks for efficient integer-arithmetic-only inference.
\newblock In \emph{Proceedings of the IEEE Conference on Computer Vision and Pattern Recognition}, pages 2704--2713.

\bibitem[{Kim et~al.(2023)Kim, Lee, Lee, Hong, Chang, Sung, and Choi}]{tsld}
Minsoo Kim, Sihwa Lee, Janghwan Lee, Sukjin Hong, Du-Seong Chang, Wonyong Sung, and Jungwook Choi. 2023.
\newblock \href {https://openreview.net/forum?id=FUnEkOkodU} {Token-scaled logit distillation for ternary weight generative language models}.
\newblock In \emph{Thirty-seventh Conference on Neural Information Processing Systems}.

\bibitem[{KT-AI(2023)}]{midm}
KT-AI. 2023.
\newblock \href {https://huggingface.co/KT-AI/midm-bitext-S-7B-inst-v1} {Mi:dm-7b}.

\bibitem[{Lee et~al.(2023{\natexlab{a}})Lee, Jin, Kim, Kim, and Park}]{lee2023owq}
Changhun Lee, Jungyu Jin, Taesu Kim, Hyungjun Kim, and Eunhyeok Park. 2023{\natexlab{a}}.
\newblock \href {http://arxiv.org/abs/2306.02272} {Owq: Lessons learned from activation outliers for weight quantization in large language models}.

\bibitem[{Lee et~al.(2023{\natexlab{b}})Lee, Kim, Baek, Hwang, Sung, and Choi}]{enhancing}
Janghwan Lee, Minsoo Kim, Seungcheol Baek, Seok Hwang, Wonyong Sung, and Jungwook Choi. 2023{\natexlab{b}}.
\newblock \href {https://doi.org/10.18653/v1/2023.emnlp-main.910} {Enhancing computation efficiency in large language models through weight and activation quantization}.
\newblock In \emph{Proceedings of the 2023 Conference on Empirical Methods in Natural Language Processing}, pages 14726--14739, Singapore. Association for Computational Linguistics.

\bibitem[{Liang et~al.(2023)Liang, Bommasani, Lee, Tsipras, Soylu, Yasunaga, Zhang, Narayanan, Wu, Kumar, Newman, Yuan, Yan, Zhang, Cosgrove, Manning, Ré, Acosta-Navas, Hudson, Zelikman, Durmus, Ladhak, Rong, Ren, Yao, Wang, Santhanam, Orr, Zheng, Yuksekgonul, Suzgun, Kim, Guha, Chatterji, Khattab, Henderson, Huang, Chi, Xie, Santurkar, Ganguli, Hashimoto, Icard, Zhang, Chaudhary, Wang, Li, Mai, Zhang, and Koreeda}]{liang2023holistic}
Percy Liang, Rishi Bommasani, Tony Lee, Dimitris Tsipras, Dilara Soylu, Michihiro Yasunaga, Yian Zhang, Deepak Narayanan, Yuhuai Wu, Ananya Kumar, Benjamin Newman, Binhang Yuan, Bobby Yan, Ce~Zhang, Christian Cosgrove, Christopher~D. Manning, Christopher Ré, Diana Acosta-Navas, Drew~A. Hudson, Eric Zelikman, Esin Durmus, Faisal Ladhak, Frieda Rong, Hongyu Ren, Huaxiu Yao, Jue Wang, Keshav Santhanam, Laurel Orr, Lucia Zheng, Mert Yuksekgonul, Mirac Suzgun, Nathan Kim, Neel Guha, Niladri Chatterji, Omar Khattab, Peter Henderson, Qian Huang, Ryan Chi, Sang~Michael Xie, Shibani Santurkar, Surya Ganguli, Tatsunori Hashimoto, Thomas Icard, Tianyi Zhang, Vishrav Chaudhary, William Wang, Xuechen Li, Yifan Mai, Yuhui Zhang, and Yuta Koreeda. 2023.
\newblock \href {http://arxiv.org/abs/2211.09110} {Holistic evaluation of language models}.

\bibitem[{Lin(2004)}]{lin-2004-rouge}
Chin-Yew Lin. 2004.
\newblock \href {https://aclanthology.org/W04-1013} {{ROUGE}: A package for automatic evaluation of summaries}.
\newblock In \emph{Text Summarization Branches Out}, pages 74--81, Barcelona, Spain. Association for Computational Linguistics.

\bibitem[{Lin et~al.(2023)Lin, Tang, Tang, Yang, Dang, and Han}]{lin2023awq}
Ji~Lin, Jiaming Tang, Haotian Tang, Shang Yang, Xingyu Dang, and Song Han. 2023.
\newblock Awq: Activation-aware weight quantization for llm compression and acceleration.
\newblock \emph{arXiv}.

\bibitem[{Liu et~al.(2023{\natexlab{a}})Liu, Zhao, Joshi, Khalman, Saleh, Liu, and Liu}]{liu2023statistical}
Tianqi Liu, Yao Zhao, Rishabh Joshi, Misha Khalman, Mohammad Saleh, Peter~J Liu, and Jialu Liu. 2023{\natexlab{a}}.
\newblock Statistical rejection sampling improves preference optimization.
\newblock \emph{arXiv preprint arXiv:2309.06657}.

\bibitem[{Liu et~al.(2023{\natexlab{b}})Liu, Oguz, Zhao, Chang, Stock, Mehdad, Shi, Krishnamoorthi, and Chandra}]{liu2023llmqat}
Zechun Liu, Barlas Oguz, Changsheng Zhao, Ernie Chang, Pierre Stock, Yashar Mehdad, Yangyang Shi, Raghuraman Krishnamoorthi, and Vikas Chandra. 2023{\natexlab{b}}.
\newblock \href {http://arxiv.org/abs/2305.17888} {Llm-qat: Data-free quantization aware training for large language models}.

\bibitem[{Longpre et~al.(2023)Longpre, Hou, Vu, Webson, Chung, Tay, Zhou, Le, Zoph, Wei, and Roberts}]{longpre2023flan}
Shayne Longpre, Le~Hou, Tu~Vu, Albert Webson, Hyung~Won Chung, Yi~Tay, Denny Zhou, Quoc~V. Le, Barret Zoph, Jason Wei, and Adam Roberts. 2023.
\newblock \href {http://arxiv.org/abs/2301.13688} {The flan collection: Designing data and methods for effective instruction tuning}.

\bibitem[{Merity et~al.(2016)Merity, Xiong, Bradbury, and Socher}]{merity2016pointerwiki}
Stephen Merity, Caiming Xiong, James Bradbury, and Richard Socher. 2016.
\newblock \href {http://arxiv.org/abs/1609.07843} {Pointer sentinel mixture models}.

\bibitem[{Mukherjee et~al.(2023)Mukherjee, Mitra, Jawahar, Agarwal, Palangi, and Awadallah}]{mukherjee2023orca}
Subhabrata Mukherjee, Arindam Mitra, Ganesh Jawahar, Sahaj Agarwal, Hamid Palangi, and Ahmed Awadallah. 2023.
\newblock \href {http://arxiv.org/abs/2306.02707} {Orca: Progressive learning from complex explanation traces of gpt-4}.

\bibitem[{OpenAI(2023)}]{gpt4}
OpenAI. 2023.
\newblock Gpt-4 technical report.
\newblock \emph{arXiv preprint arXiv:2303.08774}.

\bibitem[{Ouyang et~al.(2022)Ouyang, Wu, Jiang, Almeida, Wainwright, Mishkin, Zhang, Agarwal, Slama, Ray et~al.}]{ouyang2022training}
Long Ouyang, Jeffrey Wu, Xu~Jiang, Diogo Almeida, Carroll Wainwright, Pamela Mishkin, Chong Zhang, Sandhini Agarwal, Katarina Slama, Alex Ray, et~al. 2022.
\newblock Training language models to follow instructions with human feedback.
\newblock \emph{Advances in Neural Information Processing Systems}, 35:27730--27744.

\bibitem[{Penedo et~al.(2023)Penedo, Malartic, Hesslow, Cojocaru, Cappelli, Alobeidli, Pannier, Almazrouei, and Launay}]{refinedweb}
Guilherme Penedo, Quentin Malartic, Daniel Hesslow, Ruxandra Cojocaru, Alessandro Cappelli, Hamza Alobeidli, Baptiste Pannier, Ebtesam Almazrouei, and Julien Launay. 2023.
\newblock \href {http://arxiv.org/abs/2306.01116} {The {R}efined{W}eb dataset for {F}alcon {LLM}: outperforming curated corpora with web data, and web data only}.
\newblock \emph{arXiv preprint arXiv:2306.01116}.

\bibitem[{Rafailov et~al.(2023)Rafailov, Sharma, Mitchell, Ermon, Manning, and Finn}]{rafailov2023direct}
Rafael Rafailov, Archit Sharma, Eric Mitchell, Stefano Ermon, Christopher~D Manning, and Chelsea Finn. 2023.
\newblock Direct preference optimization: Your language model is secretly a reward model.
\newblock \emph{arXiv preprint arXiv:2305.18290}.

\bibitem[{Raffel et~al.(2019)Raffel, Shazeer, Roberts, Lee, Narang, Matena, Zhou, Li, and Liu}]{c4}
Colin Raffel, Noam Shazeer, Adam Roberts, Katherine Lee, Sharan Narang, Michael Matena, Yanqi Zhou, Wei Li, and Peter~J. Liu. 2019.
\newblock \href {http://arxiv.org/abs/1910.10683} {Exploring the limits of transfer learning with a unified text-to-text transformer}.
\newblock \emph{arXiv e-prints}.

\bibitem[{Roemmele et~al.(2011)Roemmele, Bejan, and Gordon}]{copa}
Melissa Roemmele, Cosmin~Adrian Bejan, and Andrew~S. Gordon. 2011.
\newblock \href {http://www.aaai.org/ocs/index.php/SSS/SSS11/paper/view/2418} {Choice of plausible alternatives: An evaluation of commonsense causal reasoning}.
\newblock In \emph{Logical Formalizations of Commonsense Reasoning, Papers from the 2011 {AAAI} Spring Symposium, Technical Report SS-11-06, Stanford, California, USA, March 21-23, 2011}. {AAAI}.

\bibitem[{Sakaguchi et~al.(2019)Sakaguchi, Bras, Bhagavatula, and Choi}]{sakaguchi2019winogrande}
Keisuke Sakaguchi, Ronan~Le Bras, Chandra Bhagavatula, and Yejin Choi. 2019.
\newblock \href {http://arxiv.org/abs/1907.10641} {Winogrande: An adversarial winograd schema challenge at scale}.

\bibitem[{Schulman et~al.(2017)Schulman, Wolski, Dhariwal, Radford, and Klimov}]{schulman2017proximal}
John Schulman, Filip Wolski, Prafulla Dhariwal, Alec Radford, and Oleg Klimov. 2017.
\newblock Proximal policy optimization algorithms.
\newblock \emph{arXiv preprint arXiv:1707.06347}.

\bibitem[{Srivastava et~al.(2023)}]{srivastava2023beyond_bbh}
Aarohi Srivastava et~al. 2023.
\newblock \href {https://openreview.net/forum?id=uyTL5Bvosj} {Beyond the imitation game: Quantifying and extrapolating the capabilities of language models}.
\newblock \emph{Transactions on Machine Learning Research}.

\bibitem[{Talmor et~al.(2019)Talmor, Herzig, Lourie, and Berant}]{talmor2019commonsenseqa}
Alon Talmor, Jonathan Herzig, Nicholas Lourie, and Jonathan Berant. 2019.
\newblock \href {http://arxiv.org/abs/1811.00937} {Commonsenseqa: A question answering challenge targeting commonsense knowledge}.

\bibitem[{Taori et~al.(2023)Taori, Gulrajani, Zhang, Dubois, Li, Guestrin, Liang, and Hashimoto}]{alpaca}
Rohan Taori, Ishaan Gulrajani, Tianyi Zhang, Yann Dubois, Xuechen Li, Carlos Guestrin, Percy Liang, and Tatsunori~B. Hashimoto. 2023.
\newblock Stanford alpaca: An instruction-following llama model.
\newblock \url{https://github.com/tatsu-lab/stanford_alpaca}.

\bibitem[{Team et~al.(2023)Team, Anil et~al.}]{geminiteam2023gemini}
Gemini Team, Rohan Anil, et~al. 2023.
\newblock \href {http://arxiv.org/abs/2312.11805} {Gemini: A family of highly capable multimodal models}.

\bibitem[{Touvron et~al.(2023{\natexlab{a}})Touvron, Lavril, Izacard, Martinet, Lachaux, Lacroix, Rozi{\`e}re, Goyal, Hambro, Azhar et~al.}]{touvron2023llama}
Hugo Touvron, Thibaut Lavril, Gautier Izacard, Xavier Martinet, Marie-Anne Lachaux, Timoth{\'e}e Lacroix, Baptiste Rozi{\`e}re, Naman Goyal, Eric Hambro, Faisal Azhar, et~al. 2023{\natexlab{a}}.
\newblock Llama: Open and efficient foundation language models.
\newblock \emph{arXiv preprint arXiv:2302.13971}.

\bibitem[{Touvron et~al.(2023{\natexlab{b}})}]{touvron2023llama2}
Hugo Touvron et~al. 2023{\natexlab{b}}.
\newblock \href {http://arxiv.org/abs/2307.09288} {Llama 2: Open foundation and fine-tuned chat models}.

\bibitem[{Ye et~al.(2023)Ye, Kim, Kim, Hwang, Kim, Jo, Thorne, Kim, and Seo}]{ye2023flask}
Seonghyeon Ye, Doyoung Kim, Sungdong Kim, Hyeonbin Hwang, Seungone Kim, Yongrae Jo, James Thorne, Juho Kim, and Minjoon Seo. 2023.
\newblock \href {http://arxiv.org/abs/2307.10928} {Flask: Fine-grained language model evaluation based on alignment skill sets}.

\bibitem[{Zellers et~al.(2019)Zellers, Holtzman, Bisk, Farhadi, and Choi}]{hellaswag}
Rowan Zellers, Ari Holtzman, Yonatan Bisk, Ali Farhadi, and Yejin Choi. 2019.
\newblock \href {http://arxiv.org/abs/1905.07830} {Hellaswag: Can a machine really finish your sentence?}
\newblock \emph{CoRR}, abs/1905.07830.

\bibitem[{Zhang et~al.(2022)Zhang, Roller, Goyal, Artetxe, Chen, Chen, Dewan, Diab, Li, Lin, Mihaylov, Ott, Shleifer, Shuster, Simig, Koura, Sridhar, Wang, and Zettlemoyer}]{opt}
Susan Zhang, Stephen Roller, Naman Goyal, Mikel Artetxe, Moya Chen, Shuohui Chen, Christopher Dewan, Mona Diab, Xian Li, Xi~Victoria Lin, Todor Mihaylov, Myle Ott, Sam Shleifer, Kurt Shuster, Daniel Simig, Punit~Singh Koura, Anjali Sridhar, Tianlu Wang, and Luke Zettlemoyer. 2022.
\newblock \href {https://doi.org/10.48550/ARXIV.2205.01068} {Opt: Open pre-trained transformer language models}.

\bibitem[{Zheng et~al.(2023)Zheng, Chiang, Sheng, Zhuang, Wu, Zhuang, Lin, Li, Li, Xing, Zhang, Gonzalez, and Stoica}]{zheng2023judging}
Lianmin Zheng, Wei-Lin Chiang, Ying Sheng, Siyuan Zhuang, Zhanghao Wu, Yonghao Zhuang, Zi~Lin, Zhuohan Li, Dacheng Li, Eric Xing, Hao Zhang, Joseph~E. Gonzalez, and Ion Stoica. 2023.
\newblock \href {https://openreview.net/forum?id=uccHPGDlao} {Judging {LLM}-as-a-judge with {MT}-bench and chatbot arena}.
\newblock In \emph{Thirty-seventh Conference on Neural Information Processing Systems Datasets and Benchmarks Track}.

\bibitem[{Zhu et~al.(2015)Zhu, Kiros, Zemel, Salakhutdinov, Urtasun, Torralba, and Fidler}]{bookcorpus}
Yukun Zhu, Ryan Kiros, Rich Zemel, Ruslan Salakhutdinov, Raquel Urtasun, Antonio Torralba, and Sanja Fidler. 2015.
\newblock Aligning books and movies: Towards story-like visual explanations by watching movies and reading books.
\newblock In \emph{The IEEE International Conference on Computer Vision (ICCV)}.

\end{thebibliography}
\bibliographystyle{acl_natbib}

\appendix

\section{Appendix}
\label{sec:appendix}

\subsection{Experimental Details}
\label{appendix:training_details}
\textbf{\quad PTQ Calibration Settings.} For PTQ calibration, we use the widely utilized method AWQ~\cite{lin2023awq}, with the calibration set consisting of 64 samples randomly extracted from the C4~\cite{c4} dataset. We apply channel-wise quantization and do not consider fine-grained quantization (e.g. group quantization) to better observe the impact of quantization on the LLM's conversational abilities.

\textbf{Knowledge Distillation Settings.} For KD setting, we follow KD method introduced in LLM-QAT~\cite{liu2023llmqat}, excluding the data curation process. To facilitate a fair comparison with QDPO, we extracted 50,000 prompts from the Anthropic Helpful and Harmless dialogue dataset~\cite{bai2022training}, and set the learning rate to 3e-6.

\textbf{Training Settings.} In our QDPO experiments, similar to the KD, we sample 50,000 prompts in English from the Anthropic Helpful and Harmless dialogue dataset and 21,155 prompts in Korean from the KoAlpaca\footnote{\url{ https://huggingface.co/datasets/beomi/KoAlpaca-v1.1a}} dataset. We collect responses using both the full-precision policy ($\pi_\text{fp}$) and the quantized policy  ($\pi_\text{q}$) to construct a preference pair dataset. The learning rate is set to 3e-6.\\

\subsection{More Details on Breakdown Analysis}
To separate the cause of errors in text generation, we employ the following steps:
\label{appendix:breakdown}
\begin{itemize}
    \item We provide the same input to both the baseline and quantized models, then observe the first 100 generations and find the timestep at which the first different token is generated between the two models. We dump these differently generated tokens, which are flipped tokens.
    \item We dump the KV cache of both the baseline and quantized models until timestep. This is facilitated easily through HuggingFace\footnote{\url{ https://github.com/huggingface/transformers}}'s \texttt{past\_key\_values} argument.
    \item Based on the dumped flipped tokens and KV cache, we observe additional generations with either the baseline or quantized model, depending on our purpose in reflecting $\text{Q}_{\text{error}}$.
\end{itemize}
\begin{figure}[t]
\centerline{\includegraphics[width=\columnwidth]{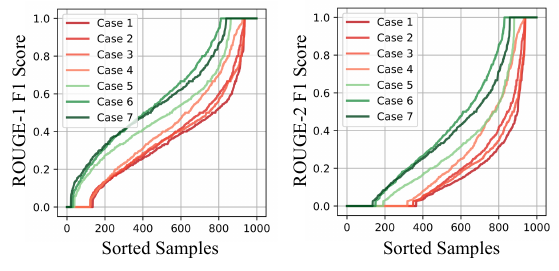}}
\caption{ROUGE-1 and ROUGE-2 scores for Fig~\ref{fig:breakdown}(c) (W4A16 RTN).}
\label{fig:rouge-1-2}
\end{figure}

\begin{figure}[t]
\centerline{\includegraphics[width=\columnwidth]{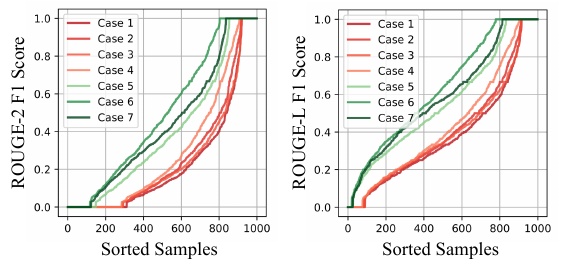}}
\caption{More results of breakdown analysis from Sec.~\ref{sec:conversation:breakdown} on KD-based QAT (W4A16).}
\label{fig:rouge-kd}
\end{figure}

\begin{figure}[t]
\centerline{\includegraphics[width=\columnwidth]{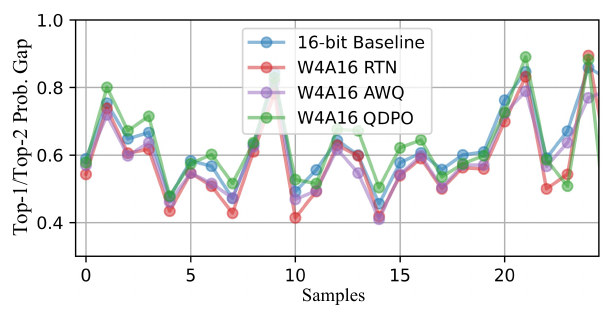}}
\caption{The average gap between the Top-1 and Top-2 tokens in AWQ shows a closer probability difference to the baseline compared to RTN, thanks to the reduction of quantization error. However, AWQ still exhibits a lower gap than the baseline model.}
\label{fig:prob_gap_awq}
\end{figure}

\subsection{QDPO’s Compatibility with Existing Techniques}

\quad \textbf{QDPO on RLHF-tuned Models.} We conduct additional experiments to investigate whether QDPO can serve as a complementary approach to recover conversational in quantized RLHF-tuned models, such as LLaMA2-Chat~\cite{touvron2023llama2} in Table~\ref{tab:appendix:llama2-chat}, In LLaMA2-Chat, which has improved conversational abilities through reflecting human preferences via RLHF, W4A16 RTN exhibits a 50\% lose-rate compared to the baseline model. However, QDPO demonstrates further recovery of conversational ability. With more aggressive quantization at 3-bit, a clearer trend is observed. RTN experiences a rapid decline in conversational ability. In contrast, QDPO significantly reduces the lose-rate by restoring the conversational ability of the baseline model. This demonstrates that QDPO effectively enhances the conversational ability of a quantized RLHF-tuned model, indicating that it is a method compatible with existing RLHF.
\begin{table}[t]
\resizebox{\columnwidth}{!}{%
\begin{tabular}{c|c|ccc|c}
\Xhline{3\arrayrulewidth}
Bit-precision & Method & Win & Tie & Lose & Lose-rate ↓ \\ \midrule
\multirow{2}{*}{W4A16} & RTN & 40 & 20 & 60 & 50.00\% \\ 
 & QDPO & 38 & 26 & 55 & \textbf{46.22\%} \\ \midrule
\multirow{2}{*}{W3A16g128} & RTN & 29 & 18 & 76 & 61.79\% \\ 
 & QDPO & 35 & 22 & 65 & \textbf{53.28\%} \\  \Xhline{3\arrayrulewidth}
\end{tabular}%
}
\caption{QDPO on RLHF-tuned model (LLaMA2-Chat 7B). g128 denotes fine-grained quantization with groupsize=128.}
\label{tab:appendix:llama2-chat}
\end{table}

\textbf{QDPO with Memory-Efficient Fine-Tuning Method.} We extend our experiments with QDPO training using LoRA~\cite{hu2022lora}. Following the approach in QLoRA~\cite{dettmers2023qlora}, we keep the quantized base weights frozen and train only the high-precision adapter. To ensure a fair comparison with other methods, we utilize INT4 for the base weights instead of NF4~\cite{dettmers2023qlora}. The adapter rank and $\alpha$ used in this experiment is 64.
As shown in Table~\ref{tab:appendix:lora}, QDPO with LoRA significantly enhances the conversational ability of quantized LLMs, achieving levels nearly identical to those of QDPO. Moreover, QDPO with LoRA reduces the required memory by keeping the quantized base weights fixed and significantly decreases the number of training parameters by only training the LoRA adapter. These results suggest that QDPO serves as a complementary method that can be utilized alongside other techniques.
\begin{table*}[t]
\resizebox{\textwidth}{!}{%
\begin{tabular}{c|ccc|c|c|c|c}
\Xhline{3\arrayrulewidth}
Method & Win & Tie & Lose & Lose-rate ↓ & \# Trainable params & Required Memory for Training$^\ast$ & Inference bit-width \\ \midrule
RTN & 24 & 6 & 66 & 0.69 & - & - & W4A16 \\ \midrule
AWQ & 28 & 9 & 52 & 0.58 & - & - & W4A16 \\ \midrule
KD & 31 & 16 & 52 & 0.53 & 7.02 B & 56.16 GB & W4A16 \\ \midrule
QDPO & 53 & 14 & 44 & \textbf{0.40} & 7.02 B & 56.16 GB & W4A16 \\ \midrule
QDPO+LoRA & 48 & 14 & 46 & 0.43 & \textbf{1.33 B} & \textbf{14.15 GB} & W16A16 \\ \Xhline{3\arrayrulewidth}
\end{tabular}%
}
\caption{QDPO with LoRA. $^\ast$We only measure required memory for $\pi_\theta$ during training (Mi:dm-7B, MT-bench pairwise comparison).}
\label{tab:appendix:lora}
\end{table*}

\subsection{MT-Bench Evaluation Metrics}

\begin{table}[t]
\resizebox{\columnwidth}{!}{%
\begin{tabular}{l|c|c|ccc|c}
\Xhline{3\arrayrulewidth}
Lang.                & Model                   & Method & Win & Tie & Lose & Lose-Rate \\ \midrule
\multirow{7}{*}{Eng} & \multirow{4}{*}{Mi:dm}   & RTN    & 18  & 103 & 55   & 0.31      \\
                     &                         & AWQ    & 23  & 110 & 46   & 0.26      \\
                     &                         & KD     & 22  & 115 & 40   & 0.23      \\
                     &                         & QDPO   & 43  & 118 & 38   & 0.19      \\ \cline{2-7} 
                     & \multirow{3}{*}{Vicuna} & RTN    & 20  & 95  & 61   & 0.35      \\
                     &                         & AWQ    & 31  & 107 & 46   & 0.25      \\
                     &                         & QDPO   & 33  & 113 & 44   & 0.23      \\ \midrule
\multirow{3}{*}{Kor} & \multirow{3}{*}{Mi:dm}   & RTN    & 20  & 103 & 45   & 0.27      \\
                     &                         & AWQ    & 20  & 111 & 37   & 0.22      \\
                     &                         & QDPO   & 39  & 109 & 40   & 0.21      \\ \Xhline{3\arrayrulewidth}
\end{tabular}%
}
\caption{Pairwise comparison results of MT-Bench following original metric of \cite{zheng2023judging}.}
\label{tab:mtbench_pair_origin}
\end{table}
\label{appendix:mt-bench-score}
\textbf{\quad Pairwise Comparison.} In pairwise comparison within MT-Bench for 80 samples, GPT-4 evaluates which model provides better responses between the two models. However, due to most LLMs' tendency to prefer the first position~\cite{zheng2023judging}, the evaluation occurs twice in reversed order, counting victories only if one model wins in both cases. If judgments reverse or both evaluations result in ties, it counts as an actual tie. We observe that GPT-4 frequently evaluates "tie" more often than usual in comparisons between different models in MT-Bench. This increased frequency of ties is because our study focuses on comparing similar models (the baseline model and the quantized model). We find that cases evaluated as a tie in both positions present many obstacles to the evaluation we desire for judging alignment. For example, as shown in Fig.~\ref{fig:tie_example1} and Fig.~\ref{fig:tie_example2}, when the baseline model provides an incorrect answer and the quantized model also offers a wrong answer (but a different response), GPT-4 provides a "tie" because they are both incorrect. However, this "tie" does not reflect our goal of assessing "how well two models are aligned." Therefore, we evaluate a tie only in cases where win/lose changes due to swapping positions, causing GPT-4 confusion. We find this evaluation method to be the most consistent with the results of other benchmarks, like Vicuna-Eval~\cite{vicuna2023}. Results obtained using the original evaluation criteria of MT-Bench is in Table~\ref{tab:mtbench_pair_origin}.

\textbf{Single-Answer Grading.} In single-answer grading, we directly request GPT-4 to assign scores of up to 10 points. While this approach may not be as nuanced as pairwise comparison in model comparisons, it enables observation of how quantization induces changes in specific categories where the model has strengths and weaknesses by measuring absolute scores by category.

\subsection{Detailed Analysis by Catagory in MT-Bench}
\label{appnedix:mt-bench-category-analysis}
As shown in Table~\ref{tab:mtbench_single_midm}, QDPO improves overall capability compared to other methods. However, we observe that in some categories, QDPO scores are lower than the AWQ model. We conducted a more detailed observation of GPT-4's evaluations in areas where QDPO exhibits lower performance. Interestingly, as depicted in Fig.~\ref{fig:mt_single_writing}, we can see that QDPO fails in cases where RTN already provides a good response and receives a high score, almost the same as the baseline generation. We believe this might be the case because QDPO is trained to reject sentences generated by the quantized model, which can lead to optimization challenges in such situations. Additional examples are present in Fig.~\ref{fig:mt_single_extraction1}.

\subsection{Skill-wise Analysis in FLASK}
\label{appnedix:flask-skill-examples}

\begin{table}[t]
\resizebox{\columnwidth}{!}{%
\begin{tabular}{c|c|cccc}
\Xhline{3\arrayrulewidth}
\multirow{2}{*}{Catergory} & \multirow{2}{*}{\begin{tabular}[c]{@{}c@{}}16-bit\\ Baseline\end{tabular}} & \multicolumn{4}{c}{W4A16}                                          \\
                           &                                                                            & RTN            & AWQ            & KD    & \multicolumn{1}{l}{QDPO} \\ \midrule
Robustness                 & 2.029                                                                      & 1.839          & \textbf{1.927} & 1.830 & 1.924                    \\
Correctness                & 2.237                                                                      & 2.087          & \textbf{2.254} & 2.206 & 2.172                    \\
Efficiency                 & 2.333                                                                      & 1.988          & 2.036          & 2.036 & \textbf{2.129}           \\
Factuality                 & 2.709                                                                      & 2.487          & 2.497          & 2.631 & \textbf{2.691}           \\
Commonsense                & 2.965                                                                      & 2.735          & 2.925          & 2.953 & \textbf{2.961}           \\
Comprehension              & 2.874                                                                      & 2.639          & 2.725          & 2.831 & \textbf{2.879}           \\
Insightfulness             & 2.268                                                                      & \textbf{2.339} & 2.079          & 2.095 & 2.246                    \\
Completeness               & 2.858                                                                      & 2.587          & 2.518          & 2.666 & \textbf{2.784}           \\
Metacognition              & 2.891                                                                      & 2.562          & 2.625          & 2.663 & \textbf{2.863}           \\
Readability                & 4.079                                                                      & 4.047          & 3.956          & 3.989 & \textbf{4.070}           \\
Conciseness                & 3.881                                                                      & 3.695          & \textbf{3.886} & 3.782 & 3.785                    \\
Harmlessness               & 4.447                                                                      & 4.500          & 4.355          & 4.512 & \textbf{4.575}           \\ \midrule
Average                    & 2.964                                                                      & 2.792          & 2.815          & 2.849 & \textbf{2.923}           \\ \Xhline{3\arrayrulewidth}
\end{tabular}%
}
\caption{FLASK score per skill.}
\label{tab:flask}
\end{table}
We aim to investigate how QDPO recovers skills that, according to FLASK's fine-grained categorization, significantly underperform in RTN and AWQ compared to the baseline. As shown in Fig.~\ref{fig:flask_metacognition1}, RTN opts for "<[!newline]>" instead of ":", leading to subsequent generations consisting solely of simple listings, and it can be observed that sentences become repetitive as they lengthen. In contrast, models applying QDPO follow the baseline by providing explanations for each item.

\subsection{Details of Task-Specific Benchmarks}
\label{appnedix:task-specific-benchmarks}
To assess the reasoning capabilities of Large Language Models (LLMs), benchmarks such as Common Sense Question Answering (CSQA)~\cite{talmor2019commonsenseqa} and MMLU~\cite{mmlu} have been widely utilized. CSQA assesses models' reasoning abilities through multiple-choice questions, while MMLU verifies models' multitask-solving capabilities across 57 different tasks with multiple-choice questions. Recently, benchmarks like DROP~\cite{dua-etal-2019-drop} and BBH~\cite{srivastava2023beyond_bbh} have been used to evaluate the problem-solving abilities of instruction-tuned models, testing skills in logic and math. Additionally, the Helpful, Honest, and Harmless (HHH)~\cite{askell2021general_hhh} benchmark is widely used to assess the extent to which these models are safe or beneficial to humans. In our experiments, we measure zero-shot CSQA benchmark and average across five tasks (WinoGrande~\cite{sakaguchi2019winogrande}, COPA~\cite{copa}, PIQA~\cite{bisk2019piqa}, BoolQ~\cite{clark-etal-2019-boolq}, HellaSwag~\cite{hellaswag}).

\subsection{Proof of Theorem.~\ref{thm:D_QDPO}}
\setcounter{theorem}{0}
\label{proof2}
\begin{theorem}
     For any response \(y\) in the set of all possible responses \(Y\), if \(y_1 = \arg\max_{y \in Y} \pi_\text{fp}(y|x)\) and \(y_2 = \arg\max_{y \in Y} \pi_\text{q}(y|x)\), then it is guaranteed that \(p^*(y_1 \succ y_2) \geq p^*(y_2 \succ y_1)\).
\end{theorem}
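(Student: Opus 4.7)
The plan is to reduce the probabilistic preference comparison to a comparison of scalar rewards via the Bradley--Terry model, and then use the closed-form expression linking the optimal reward to the ratio of policies (Eq.~\ref{eq:opt_reward}) to recast everything as a statement about $\pi_{\text{fp}}$ and $\pi_{\text{q}}$ alone. Concretely, I would identify the optimal policy $\pi^*$ in the DPO derivation with the full-precision model $\pi_{\text{fp}}$, and the reference policy $\pi_{\text{ref}}$ with the quantized model $\pi_{\text{q}}$, as is done implicitly in Eq.~\ref{eq:QDPO}.

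First, I would observe that by the BT model in Eq.~\ref{eq:BT}, the inequality $p^*(y_1 \succ y_2 \mid x) \geq p^*(y_2 \succ y_1 \mid x)$ holds if and only if $r^*(x,y_1) \geq r^*(x,y_2)$, because $p^*(y_1 \succ y_2 \mid x) = \sigma(r^*(x,y_1) - r^*(x,y_2))$ and $\sigma$ is monotone with $\sigma(0)=1/2$. Next, I would apply Eq.~\ref{eq:opt_reward} to both $y_1$ and $y_2$ and subtract: the $\beta \log Z(x)$ terms cancel, yielding
\begin{equation*}
r^*(x,y_1) - r^*(x,y_2) = \beta \log \frac{\pi_{\text{fp}}(y_1 \mid x)\,\pi_{\text{q}}(y_2 \mid x)}{\pi_{\text{fp}}(y_2 \mid x)\,\pi_{\text{q}}(y_1 \mid x)}.
\end{equation*}
So the task reduces to showing the argument of the logarithm is at least $1$, i.e., $\pi_{\text{fp}}(y_1 \mid x)\,\pi_{\text{q}}(y_2 \mid x) \geq \pi_{\text{fp}}(y_2 \mid x)\,\pi_{\text{q}}(y_1 \mid x)$.

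This final step is immediate from the defining property of $y_1$ and $y_2$: since $y_1 = \arg\max_y \pi_{\text{fp}}(y \mid x)$, we have $\pi_{\text{fp}}(y_1 \mid x) \geq \pi_{\text{fp}}(y_2 \mid x) \geq 0$, and since $y_2 = \arg\max_y \pi_{\text{q}}(y \mid x)$, we have $\pi_{\text{q}}(y_2 \mid x) \geq \pi_{\text{q}}(y_1 \mid x) \geq 0$. Multiplying these two nonnegative inequalities gives exactly the desired bound, and hence $r^*(x,y_1) \geq r^*(x,y_2)$, which by the first step is equivalent to the claim.

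The only genuine subtlety, and the part I would be most careful about in the write-up, is justifying the use of Eq.~\ref{eq:opt_reward} in this non-standard setting. The DPO identity is derived under the assumption that $\pi^*$ is the optimal policy for the KL-regularized objective in Eq.~\ref{eq:rlhf} with reference $\pi_{\text{ref}}$; here we are asserting that the pair $(\pi_{\text{fp}}, \pi_{\text{q}})$ plays exactly this role, so that the induced reward function is the ``true'' $r^*$ against which preferences are measured. Once this interpretive step is granted, the chain of equivalences and the two $\arg\max$ inequalities make the proof essentially a one-line algebraic manipulation, and no further technical machinery (e.g., concerning the support of $\pi_{\text{q}}$ or the finiteness of $Z(x)$) is needed beyond the standard assumption that both policies are strictly positive on $Y$ so that the log-ratios are well defined.
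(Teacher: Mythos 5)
Your proposal is correct and follows essentially the same route as the paper: both identify $\pi_\text{fp}$ with the optimal policy and $\pi_\text{q}$ with the reference in Eq.~(\ref{eq:opt_reward}), substitute into the Bradley--Terry expression so the partition function cancels, and reduce the claim to the two $\arg\max$ inequalities $\pi_\text{fp}(y_1|x)\geq\pi_\text{fp}(y_2|x)$ and $\pi_\text{q}(y_2|x)\geq\pi_\text{q}(y_1|x)$, which make the sigmoid argument nonnegative and hence $p^*(y_1\succ y_2|x)\geq 0.5$. Your explicit caveats about the interpretive step and the strict positivity of the policies are reasonable refinements of, not departures from, the paper's argument.
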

\begin{proof}
The definition of \(\arg\max\) ensures that for all \(y \in Y\), \(\pi_\text{fp}(y_1|x) \geq \pi_\text{fp}(y|x)\) and \(\pi_\text{q}(y_2|x) \geq \pi_\text{q}(y|x)\) holds true. Consequently, this implies \(\pi_\text{fp}(y_1|x) \geq \pi_\text{fp}(y_2|x)\) and \(\pi_\text{q}(y_2|x) \geq \pi_\text{q}(y_1|x)\).\\
Substituting eq.~(\ref{eq:opt_reward}) into eq.~(\ref{eq:BT}) we obtain:
\begin{equation}
\begin{aligned}
p^*(y_1 &\succ y_2|x)\\
&=\frac{1}{1 + \exp \left( \beta \log \frac{\pi_\text{fp}(y_2|x)}{\pi_{q}(y_2|x)} - \beta \log \frac{\pi_\text{fp}(y_1|x)}{\pi_{q}(y_1|x)} \right)}\\
&= \sigma \left( \beta \log \frac{\pi_\text{fp}(y_1|x)}{\pi_{q}(y_1|x)} - \beta \log \frac{\pi_\text{fp}(y_2|x)}{\pi_{q}(y_2|x)} \right)\\
&=\sigma \left( \beta \left( \log \frac{\pi_\text{fp}(y_1|x)}{\pi_\text{fp}(y_2|x)}- \log \frac{\pi_\text{q}(y_1|x)}{\pi_\text{q}(y_2|x)} \right) \right)
\end{aligned}
\end{equation}
$\log \frac{\pi_\text{fp}(y_1|x)}{\pi_\text{fp}(y_2|x)}- \log \frac{\pi_\text{q}(y_1|x)}{\pi_\text{q}(y_2|x)} $ and \(\beta\) is positive, it follows that \(p^*(y_1 \succ y_2|x) \geq 0.5\). Consequently, this implies that \(p^*(y_1 \succ y_2) \geq p^*(y_2 \succ y_1)\).
\end{proof}

\subsection{Generation Examples}
\label{appendix:examples}

Fig.~\ref{fig:example1} demonstrates a decline in language model performance due to the generation of different tokens compared to the baseline. The baseline model selects ``Wear" following ``1.", whereas the PTQ model, experiencing a change in probability ranking, chooses ``Always." The PTQ model then repeats this word, leading to expressions that feel awkward to humans. On the other hand, QDPO recovers the probabilities similar to the baseline model, thereby continuing with the natural generation.
\begin{figure}[ht]
\centerline{\includegraphics[width=\columnwidth]{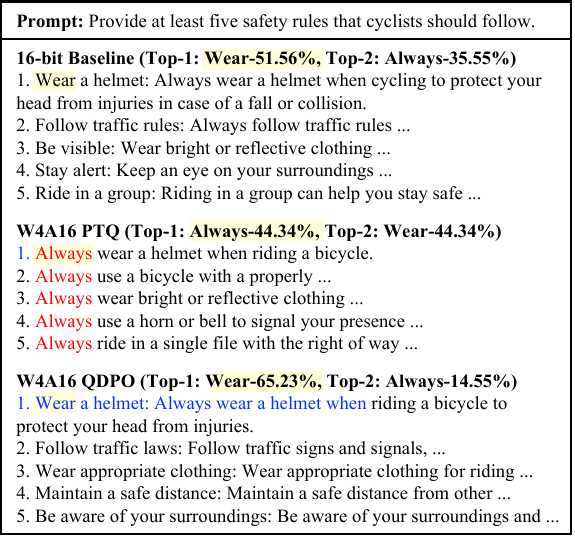}}
\caption{Generation example of 16-bit and W4A16 inference (Model: Mi:dm-7B).}
\label{fig:example1}
\end{figure}
\begin{figure*}[ht]
\centerline{\includegraphics[width=\textwidth]{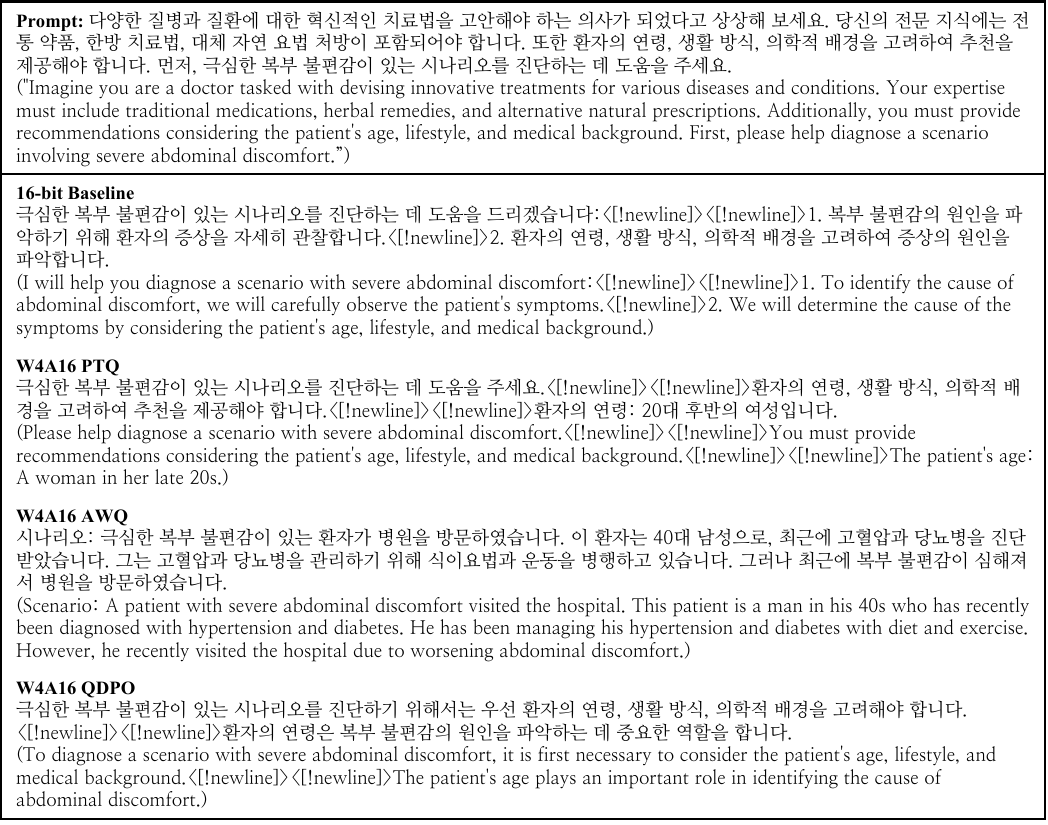}}
\caption{Generation example of 16-bit and W4A16 inference (Korean).}
\label{fig:example2}
\end{figure*}

\begin{figure*}[ht]
\centerline{\includegraphics[width=\textwidth]{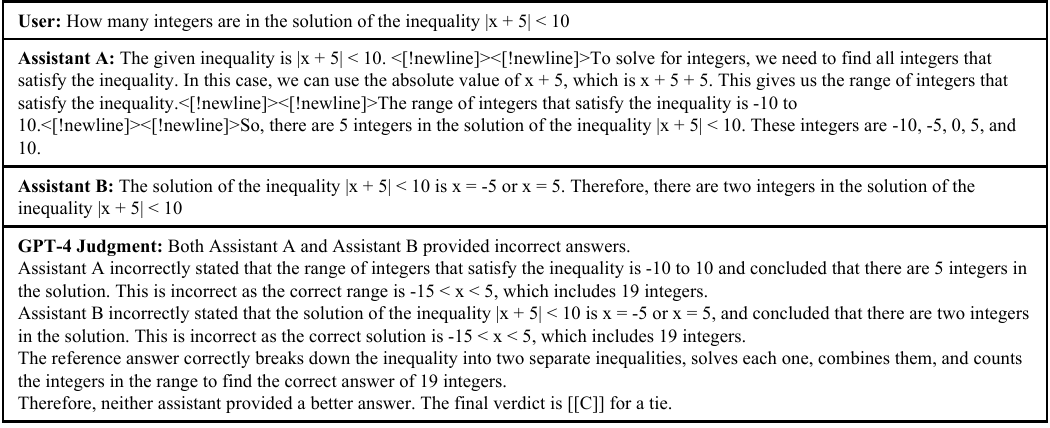}}
\caption{Tie example in MT-Bench pairwise comparison.}
\label{fig:tie_example1}
\end{figure*}

\begin{figure*}[ht]
\centerline{\includegraphics[width=\textwidth]{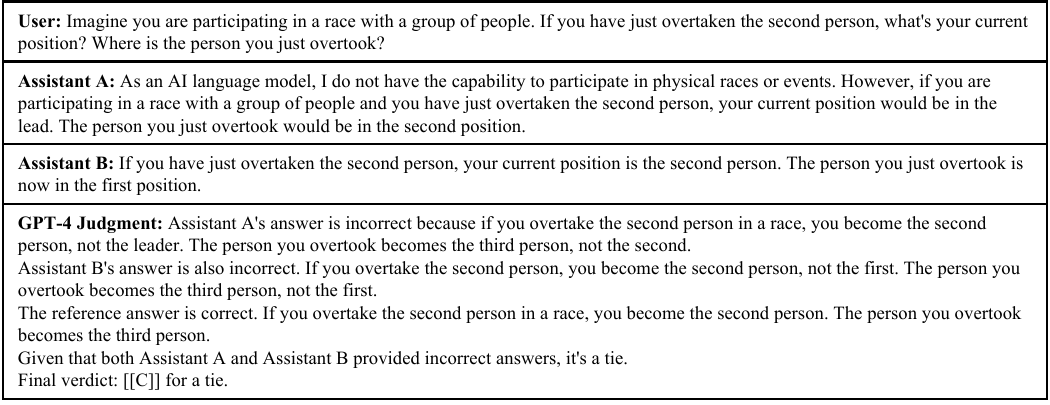}}
\caption{Tie example in MT-Bench pairwise comparison.}
\label{fig:tie_example2}
\end{figure*}

\begin{figure*}[ht]
\centerline{\includegraphics[width=\textwidth]{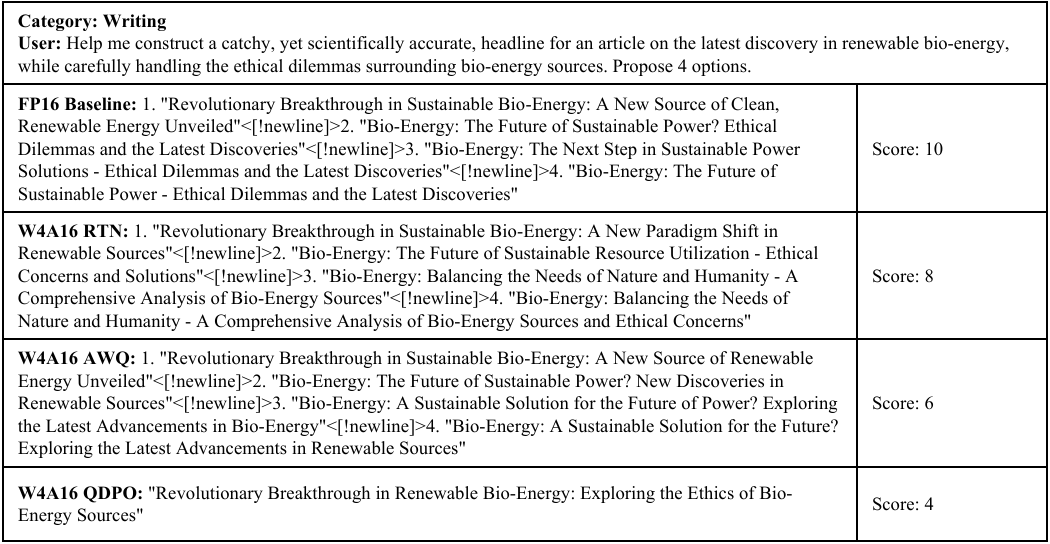}}
\caption{MT-Bench single-grading judgment example (Category: Writing).}
\label{fig:mt_single_writing}
\end{figure*}
\begin{figure*}[ht]
\centerline{\includegraphics[width=\textwidth]{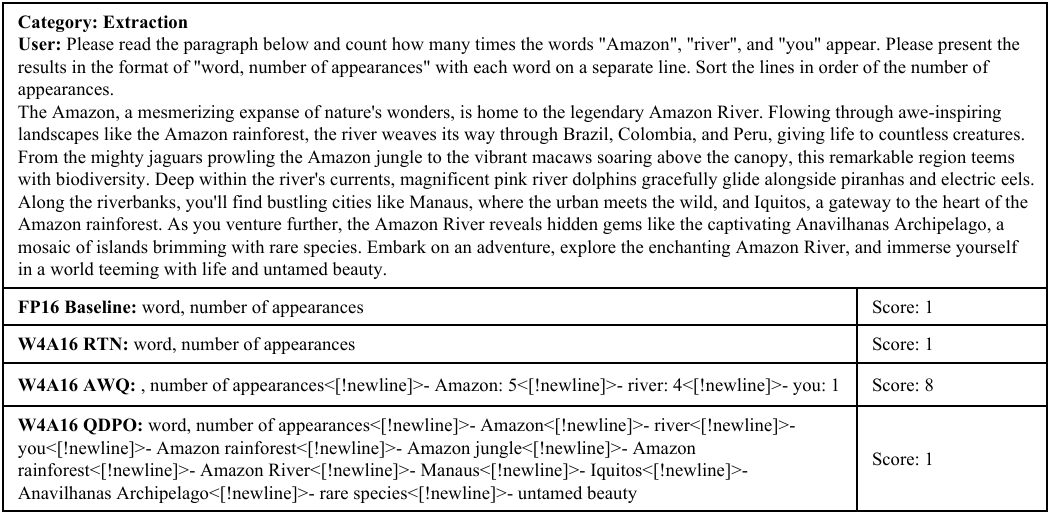}}
\caption{MT-Bench single-grading judgment example (Category: Extraction).}
\label{fig:mt_single_extraction1}
\end{figure*}
\begin{figure*}[ht]
\centerline{\includegraphics[width=\textwidth]{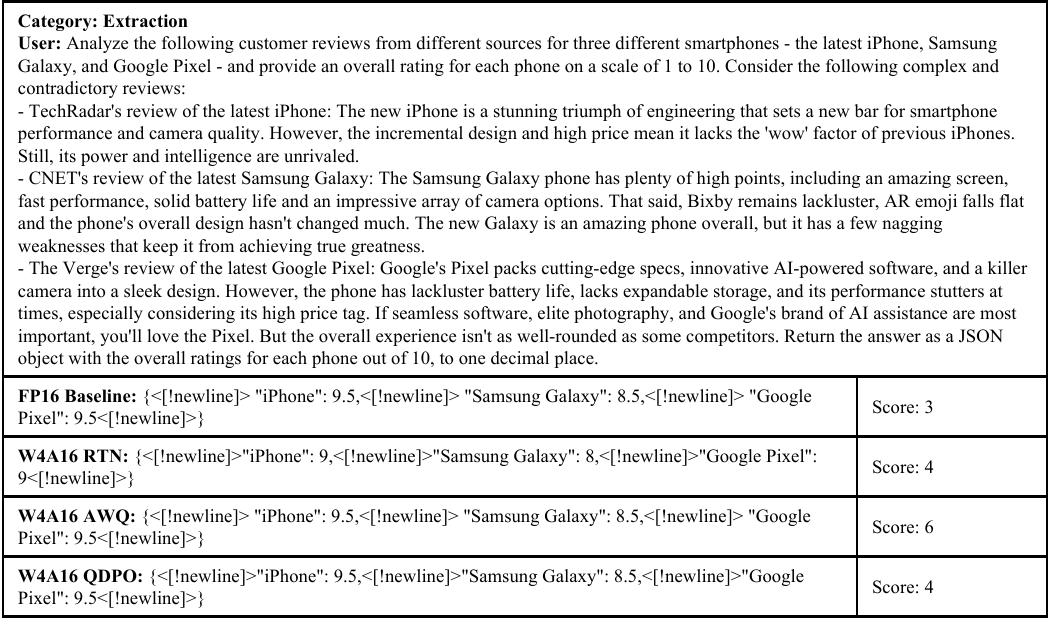}}
\caption{MT-Bench single-grading judgment example (Category: Extraction).}
\label{fig:mt_single_extraction2}
\end{figure*}

\begin{figure*}[t]
\centerline{\includegraphics[width=\textwidth]{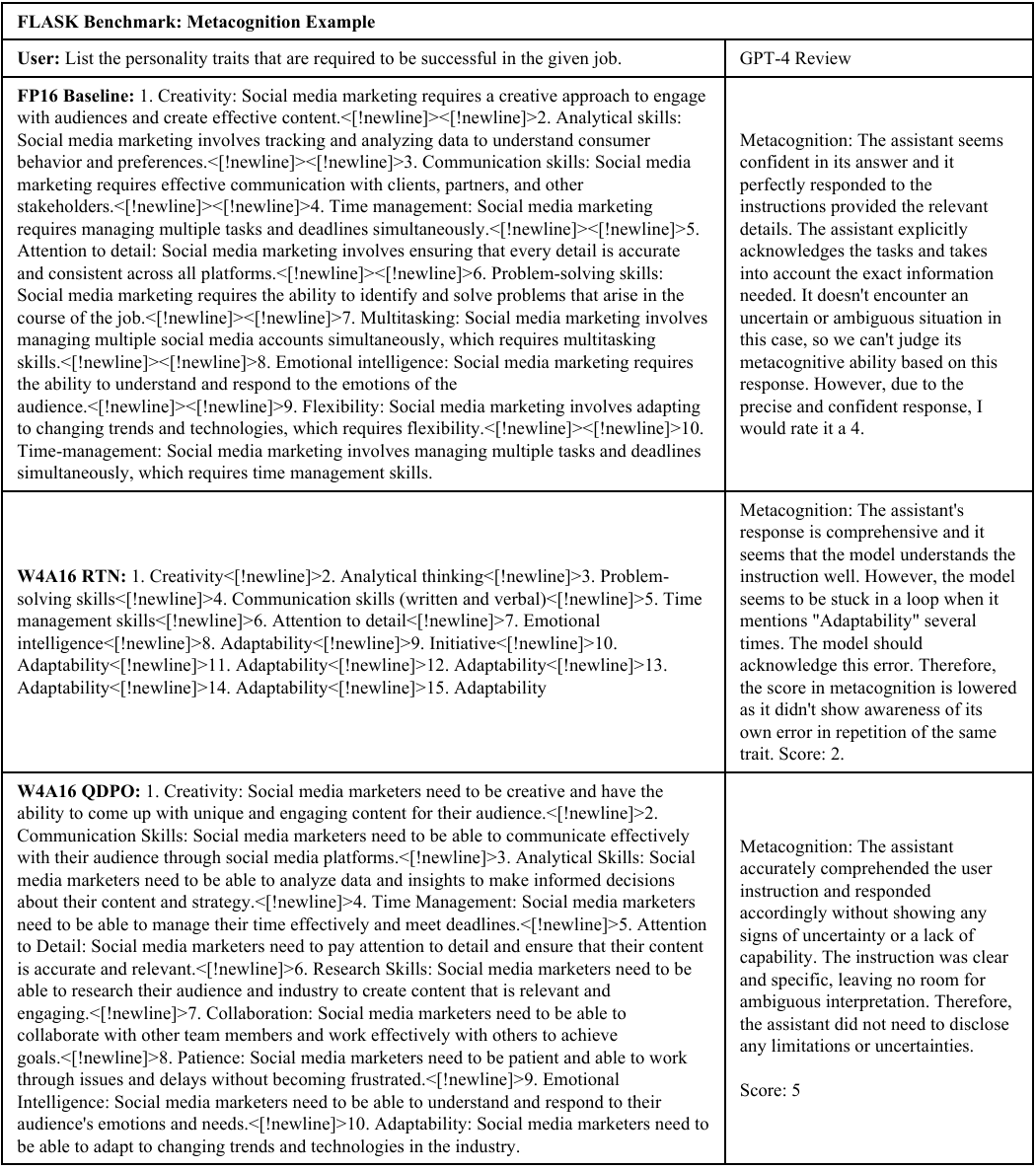}}
\caption{GPT-4 judgment in FLASK benchmark (evaluation for metacognition skill).}
\label{fig:flask_metacognition1}
\end{figure*}

\end{document}